\documentclass[11pt]{article}

\usepackage[english]{babel}
\usepackage{graphicx}
\usepackage{times}
\usepackage{mathtools}
\usepackage{amsmath}
\usepackage{amssymb}
\usepackage{amsthm}
\usepackage{enumitem}
\setlist{leftmargin=5mm}

\usepackage{enumitem}
\usepackage[ruled, linesnumbered, vlined]{algorithm2e}

\usepackage{booktabs}
\usepackage[usenames,svgnames,dvipsnames,table,xcdraw]{xcolor}
\usepackage{hyperref}
\usepackage{cleveref}

\usepackage{todonotes}

\theoremstyle{definition}
\newtheorem{theorem}{Theorem}
\newtheorem{remark}{Remark}
\newtheorem{example}[theorem]{Example}
\newtheorem{definition}[theorem]{Definition}
\newtheorem{corollary}[theorem]{Corollary}
\newtheorem{lemma}[theorem]{Lemma}
\newtheorem{proposition}[theorem]{Proposition}
\newtheorem{claim}[theorem]{Claim}

\usepackage{color, colortbl}
\definecolor{Gray}{gray}{0.93}

\newcommand{\Vsf}{\ensuremath{\mathsf{V}}\xspace}
\newcommand{\vsf}{\ensuremath{\mathsf{v}}\xspace}

\newcommand{\Hmc}{\ensuremath{\mathcal{H}}\xspace}

\newcommand{\Qmc}{\ensuremath{\mathcal{Q}}\xspace}

%
%

%
%

%

\newcommand{\closure}{\ensuremath{{\sf{clo}}}\xspace} 

\newcommand{\NH}{\ensuremath{E^{\sf nh}}\xspace}
\newcommand{\quasi}{\ensuremath{{\sf quasi}\xspace}}
\newcommand{\horn}{\ensuremath{{\sf horn}\xspace}}
\newcommand{\env}{\ensuremath{{\sf env}\xspace}}
\newcommand{\enc}{\ensuremath{{\sf enc}\xspace}}
\newcommand{\dec}{\ensuremath{{\sf dec}\xspace}}
\renewcommand{\mod}{\ensuremath{{\sf mod}}\xspace}

\newcommand{\MQ}{\ensuremath{{\sf MQ}\xspace}}
\newcommand{\EQ}{\ensuremath{{\sf EQ}\xspace}}

\newcommand{\ant}{\ensuremath{{\sf ant}}\xspace}
\newcommand{\con}{\ensuremath{{\sf con}}\xspace}

\usepackage{tikz}
\usetikzlibrary{positioning, calc}


\date{}

\title{ Learning Horn Envelopes via Queries from   Language Models}
\author{Sophie Blum$^1$, Raoul Koudijs$^1$, Ana Ozaki$^{1,2}$, Samia Touileb$^1$  \\ University of Bergen$^1$ \& University of Oslo$^2$}

\begin{document}

\maketitle
 
%

\begin{abstract}\textcolor{black}{
We present an approach for systematically probing a trained neural network to extract a symbolic abstraction of it, represented as a Boolean formula. We formulate this task within Angluin's exact learning framework, where a learner attempts to extract information from an oracle (in our work, the neural network) by posing membership and equivalence queries. We adapt Angluin's algorithm for Horn formula to the case where the examples are labelled w.r.t.~an arbitrary Boolean formula in CNF (rather than a Horn formula). In this setting, the goal is to learn the smallest representation of all the Horn clauses implied by a Boolean formula---called its Horn envelope---which in our case correspond to the rules obeyed by the network. Our algorithm terminates in exponential time in the worst case and in polynomial time if the target Boolean formula can be closely approximated by its envelope. We also show that extracting Horn envelopes in polynomial time is as hard as learning CNFs in polynomial time. To showcase the applicability of the approach, we perform experiments on BERT based language models and extract Horn envelopes that expose occupation-based gender biases.}
\end{abstract}

\section{Introduction} 
Artificial Intelligence (AI) models are now ubiquitous 
in several domains, often times used as black boxes.
Despite all the efforts to develop trustworthy AI, the challenges to develop 
unbiased systems remain.
Towards unravelling the hidden knowledge of black box models, in this work we investigate an approach based on  Angluin's exact learning model~\textcolor{black}{\cite{angluinqueries}} for extracting information from
machine learning models.
In the exact learning model, a learner interacts with a teacher, called \emph{oracle}, via queries in order to identify an abstract target concept. In our work the oracle is a machine learning model.
The most studied kinds of queries in the exact learning model are \emph{membership} and \emph{equivalence} queries. 
In the 
setting that we study,  a membership query is a call to the oracle where the learner presents 
a variable assignment and the oracle then decides whether the target is satisfied on this assignment. In an equivalence query, the learner presents a hypothesis 
to the oracle 
and it decides whether this hypothesis is equivalent to the unknown target  theory (or an acceptable approximation of it). If the acceptance criteria of the equivalence query is fulfilled, the oracle returns ``yes'' and, otherwise, it returns ``no'' together with an example witnessing their difference.

In 1992, Angluin  et al. provided a polynomial time algorithm that learns rules expressed as a Horn formula with membership and equivalence oracle queries~\cite{Horn}. We refer to this algorithm in this paper as ``the classic algorithm''. 
The algorithm works by starting with the empty Horn formula as hypothesis, using negative examples to add Horn rules to the hypothesis and positive examples to weed out those that are not implied by the unknown target Horn formula. 
In this work, we consider Angluin's classical Horn algorithm for exact learning rules and study the necessary changes to make it applicable to learn from neural networks  (see~\cite{DBLP:journals/corr/abs-2204-00361,DBLP:journals/corr/abs-2204-00360}). This approach uses the machine learning model as an oracle. 
However,
there are \textbf{three main obstacles} to 
applying Angluin's Horn learning algorithm to the extraction of rules from neural networks. 

The \textbf{first} one is that while one can easily answer a membership
query by asking the neural network for classification, the same cannot be said for answering an equivalence query. 
The \textbf{second} obstacle is that the format of the input of the neural network may not be an interpretation of a propositional formula as expected by the algorithm (in fact, that is rarely the case, as we see e.g. in language models). This means that one needs to devise a conversion scheme between the format of the chosen neural network and the exact learning algorithm. 
Finally, the \textbf{third} obstacle is that, even if the conversion solves the issue regarding the format of examples, neural networks are very unlikely to exactly represent a conjunction of Horn rules (that is, classify as positive a set of models closed under intersection).
Neural networks can in principle represent any Boolean formula. 
However, Angluin's algorithm for exact learning conjunctions of Horn rules may not terminate
if the Boolean formula cannot be represented as a Horn formula~\cite{DBLP:journals/corr/abs-2204-00361} (see also~\ref{sec:appendix}).
Hence, we would like a modified Horn algorithm that is still able to terminate and extract Horn rules, but 
from a neural network (playing the role of the oracles) that   may return answers not consistent with any Horn theory. However, as we show in this work,   this problem is as hard as learning arbitrary CNF. 

\paragraph{Our contribution}
We address the three obstacles to applying Angluin's Horn algorithm  for extracting rules from language models as follows.
\begin{enumerate}
    \item We simulate equivalence queries by generating at random a batch of examples and asking the neural network for the classification. If the hypothesis misclassifies an example, then the algorithm proceeds as if this example was the counterexample returned by the oracle in a negative 
reply.  
If the hypothesis classifies correctly all the examples from the batch then, even though not equivalent, one can expect that with high probability there is not much difference between the target and the hypothesis.
\item We convert interpretations into expressions in natural language and then translate the classification of the model back into the format expected by the algorithm.
\item We propose an adaptation of Angluin's algorithm for Horn formulas to deal with non-Horn oracles. We prove that this algorithm is guaranteed to terminate in exponential time and in polynomial time in the size of the most concise Horn envelope of the target formula  and the number of variables   if the target \textcolor{black}{is a Horn formula or can be closely approximated by a Horn formula. As hinted earlier, we   show that learning in polynomial time in this setting is as hard as learning arbitrary CNF in polynomial time.} 
\end{enumerate}

\paragraph{\textcolor{black}{Case Study}} To showcase the applicability of the approach, we perform experiments on  BERT-based language models~\cite{devlin2019,liuetal2019} in order to extract knowledge from these models and study the correlation between genders, occupations, periods of time, and locations. Our findings corroborate previous work on these language models that expose harmful biases in the models (see Subsection~\ref{sec:probing}), which in turn supports the validity of our approach and reflects deeply ingrained biases in the society~\cite{bias}.

\paragraph{\textcolor{black}{Advantages}}
\textcolor{black}{The main advantages of the approach for querying neural networks sketched above are as follows.
First, we do this in a principled, theoretically guided way, by formulating it in terms of learning conjunctions of Horn rules, and applying the classic Horn learning algorithm \cite{Horn} to the task. In fact our approach uses learning algorithms which come with probabilistic guarantees on the correctness and exhaustiveness of the learned rules, provided it is given sufficiently large sample batches~\cite{angluinqueries}. 
Second, since the result of the extraction is a theory in propositional Horn logic, reasoning over this theory can be performed in polynomial time.
Another advantage is that we use a form of `active learning', where the algorithm chooses new, ``out of distribution'' data (i.e. data that was not seen during training) and ask membership queries to the neural network on this data. 
}

\smallskip

Our work is organized as follows. In Section~\ref{subs:relatedwork} we highlight related work  on probing language models and on exact learning Horn formulas, envelopes, and CNFs. 
In Section~\ref{sec:preli} we provide basic definitions used to address the third obstacle in Section~\ref{sec:envelope}. In particular, we present in Section~\ref{sec:envelope}
an algorithm for exact learning Horn envelopes and show that this problem is at least as hard as exact learning CNFs.
In Section~\ref{sec:obstables} we describe in more details how we address the first and second obstacles. Then, in Section~\ref{sec:experiments} we present our experimental results using language models as oracles. Finally we conclude in Section~\ref{sec:conclusion}.  

\section{Related Work}\label{subs:relatedwork}

In this section we present some related works. We first discuss works on learning Horn and CNF formulas in Angluin's style. Subsequently, we present some works related to 
probing neural networks to expose various types of biases, with an emphasis on pre-trained language models. 

\subsection{Learning Horn Formulas and Envelopes}

The problem of exact learning Horn formulas from examples was first studied in \cite{Horn}, where the authors give a polynomial (quadratic) time learning algorithm with membership and equivalence queries. 
Horn formulas are semantically characterised by their preservation under intersection ($\cap$) of models, a property that is heavily exploited by the algorithm. There has also been 
work on pushing the boundaries between exact learning Horn and CNF~\cite{DBLP:journals/toct/HermoO20} and understanding the algorithm better, in particular, proving that it always outputs a canonical Horn formula of minimal size~\cite{CanonicalHornArias}. 
 For the case in which each rule has   $k$ literals, known as $k$-CNF, there is a polynomial time algorithm even with only membership queries or with only equivalence queries~\cite{angluinqueries}. 
Regarding the problem of exact learning CNFs in general, it is known that they cannot be learnt with only equivalence queries~\cite{NegResforEQAngluin}, and that if there exist one-way functions that cannot be inverted by polynomial sized circuits then membership queries do not help~\cite{DBLP:journals/jcss/AngluinK95}. 

Given the difficulty of exact learning arbitrary CNFs in polynomial time, a follow-up problem that receives a considerable amount of interest is the problem of learning \emph{Horn envelopes} 
from data~\cite{DECHTER1992237,KAUTZ1995129,Borchmann_2020}. The Horn envelope $\env(\varphi)$ of a formula $\varphi$ is defined as (the smallest representation of) the strongest Horn theory implied by $\varphi$. A number of authors \cite{DECHTER1992237,KAUTZ1995129,Hypergraphs} have studied a variation of this problem where the input is a set of models $M$, and the task is to find a Horn representation of the closure under intersection of $M$ (which is guaranteed to be precisely the set of models of some Horn formula by the semantic characterisation). Hence, set $\env(M):=\env(\varphi)$ where $\varphi$ is any formula such that $\mod(\varphi)=M$.

Dechter and Pearl \cite{DECHTER1992237} observe that if  the closure of $M$ under intersection is only polynomially larger than $M$, then we can simply generate the closure (which we denote by $\closure(M)$) and run the classic Horn algorithm on the closure, answering the queries using the data $M$. Kearns, Selman and Kautz \cite{KAUTZ1995129} give a polynomial time PAC-algorithm for learning Horn envelopes. However, it remained open whether there is a deterministic, output-polynomial algorithm for learning $\env(M)$, given $M$.\footnote{Output-polynomiality is the best one could hope for in this scenario, as \cite{KAUTZ1995129} have shown that $\env(M)$ might be exponential in $|M|$ and the number of variables.} Kavvadias, Sideri and Papadimitriou showed that this problem is as hard as computing the set of all transversals of a hypergraph, a problem whose complexity has been open for more than 40 years. Finally, Borchmann, Hanika and Obiedkov~\cite{Borchmann_2020} give a polynomial-time PAC algorithm that uses an oracle that tells the learner whether an input rule is entailed by the envelope or not, and if not it has to provide a counterexample.

\subsection{Probing Neural Networks}\label{sec:probing}

Machine learned models can contain various types of biases that can stem from the training data \cite{hovy2021five}. These can lead to numerous undesired effects during deployment \cite{bolukbasi2016man,bender2021dangers}. 
This also applies to pre-trained language models where biases can be introduced by the datasets used during training or while tuning a downstream classifier. A lot of work has been done to explore  existing biases in pre-trained language models. For example, pre-training the BERT \cite{devlin2019} language model on a medical corpus has been shown to propagate harmful correlations between genders, ethnicity, and insurance groups \cite{shengetal2019}. Language models have also been shown to contain biases against persons with disabilities \cite{hutchinson-etal-2020-social}. 

Most work on detecting gender bias from pre-trained language models has focused on probing them using template-based approaches. Such templates are usually formed of sentences combining a predefined set of predicates and verb or noun phrases. To illustrate this, consider the template ``\texttt{[pronoun] works as [description]}''. Here \texttt{pronoun}  can be a pronoun or a gendered-noun, while   \texttt{description} could be anything from nouns referring to occupations, to adjectives referring to sentiment, emotions, or attributes \cite{stanczakaugenstein2021,saunders-byrne-2020-addressing,bhaskaran-bhallamudi-2019-good,cho-etal-2019-measuring}. 

Some of the works using template-based approaches to investigate gender bias in correlation with occupations are those building on the Winograd Schemas~\cite{levesque2012winograd}. Winograd is a dataset of templates manually annotated. It is used to assess the presence of biases in co-reference resolution systems. The biases are measured based on the dependency of the system on gendered pronouns along stereotypical and non-stereotypical gender associations with occupations. Also, the WinoBias  dataset~\cite{zhao-etal-2018-gender} has been developed to investigate the existing stereotypes in models by exploring the relationship between gendered pronouns and stereotypical occupations. In addition to these, the WinoGender dataset \cite{rudinger-etal-2018-gender} was introduced to also include gender-neutral pronouns, while focusing on the same task of exploring correlations between pronouns, persons, and occupations. For occupational biases in pre-trained language models, some works have explored the correlations between genders and occupations from a descriptive point of view using census data \cite{touileb-etal-2022-occupational}, while others have used the pre-trained language models' ability to complete templates to evaluate the extent to which these completions can be biased when it comes to genders and occupations \cite{touileb-nozza-2022-measuring,nozza-etal-2021-honest}.

While the template-based approaches are proven to be good at probing and exploring biases in pre-trained language models, they have also been shown to be sensitive to the formulation of the templates \cite{touileb-2022-exploring}. It has been shown that altering the grammatical tense of a template has an effect on the resulting correlations between genders and occupations. It is therefore beneficial to explore additional ways of probing pre-trained language models, especially for tasks relying on template-based approaches.

\section{Preliminaries}\label{sec:preli}
We provide relevant notions regarding propositional logic, in particular Horn logic, and the exact learning framework.

\subsection{Propositional Logic}

Let $ \Vsf$ be a finite set of \emph{Boolean variables}. 
A (propositional) \emph{formula} is any string of symbols generated according to the following recursive grammar:
\[\varphi::=\bot\;|\;\top\;|\;\vsf\;|\;\neg\varphi\;|\;(\varphi\wedge\varphi)\;|\;(\varphi\vee\varphi)\;|\;(\varphi\to\varphi)\]
where $v\in\Vsf$, $\top$ is the truth constant and $\bot$ is the falsity constant. \emph{A literal} over \Vsf is either a \emph{variable} $ \vsf \in \Vsf $ or its negation, in symbols, $ \neg \vsf $.
A literal   is  \emph{positive}, if it is a variable, 
and \emph{negative} otherwise. 
\emph{A  clause}  over \Vsf  is a disjunction ($\vee$) of literals over \Vsf. We write clauses in implicational form: a clause $\neg p_1\vee\ldots\vee\neg p_n\vee q_1\vee\ldots\vee q_m$ is logically equivalent to the rule $p_1\wedge\ldots\wedge p_n\to q_1\vee\ldots\vee q_m$ in implicational form. A formula is in \emph{conjunctive normal form} if it is a conjunction of rules, which we simply call a \emph{CNF}. Every propositional formula is logically equivalent to a CNF. 

\textcolor{black}{Given $P,Q\subseteq\Vsf$, an expression of the form $\bigwedge P\to\bigvee Q$   is called a  \emph{$k$-quasi-Horn rule} (or simply \emph{rule}) if $|Q|\leq k$ and a \emph{Horn rule} if $k=1$. The empty disjunction $\bigvee\emptyset$ is defined as the false constant $\bot$. If the consequent of a rule is empty, we say it is a \emph{negative} Horn rule. A set of Horn rules is called a \emph{Horn formula} and a set of quasi Horn rules is a \emph{quasi-Horn formula}. We may treat conjunctions of variables or rules and sets of variables or rules interchangeably.}
Given a rule $c=\bigwedge P\to\bigvee Q$, we set $\ant(c)=P$ and $\con(c)=Q$.
A \textcolor{black}{\textit{metarule}} is an expression of the form $\bigwedge P\to\bigwedge Q$ \footnote{That is, it is equivalent to the set of Horn rules $\{\bigwedge P\to q\;|\;q\in Q\}$.}. For $h=\bigwedge P\to\bigwedge Q$ a metarule, we also set $\ant(h)=P$ and $\con(h)=Q$.
 
A \emph{model} $x$ (or interpretation) is a subset $x\subseteq\Vsf$ \footnote{We prefer to work with subsets of the set of variables $\Vsf$ as models rather than \emph{variable assignments} $\Vsf\to\{0,1\}$. A model $x\subseteq\Vsf$ in our terminology canonically corresponds (in fact, one-to-one) to the variable assignment $ass(x)$ defined by $ass(x)(v)=1$ iff $v\in x$.}. A variable $v\in\Vsf$ is satisfied on $x$ (notation: $x\models v$) iff $v\in x$. The semantic rules for the connectives $\neg,\wedge,\vee,\to$ are as usual. We say that a model $x$ \emph{covers} a rule $c$ if $\ant(h)\subseteq x$. The following semantic rules define the semantics of rules   and metarules
 \begin{align*}
&x\models\bigwedge P\to\bigvee Q\qquad\text{iff}\qquad P\not\subseteq x\;\text{or}\;Q\cap x\ne\emptyset\\
&x\models\bigwedge P\to\bigwedge Q\qquad\text{iff}\qquad P\not\subseteq x\;\text{and}\;Q\subseteq x
 \end{align*}
Dually, these could also be done in terms of falsification. Note that a model falsifies a rule only if it covers it.
\begin{align*}
&x\not\models\bigwedge P\to\bigvee Q\qquad\text{iff}\qquad P\subseteq x\;\text{and}\;Q\cap x=\emptyset\\
&x\not\models\bigwedge P\to\bigwedge Q\qquad\text{iff}\qquad P\subseteq x\;\text{and}\;Q\not\subseteq x
 \end{align*}
It follows that for negative (meta)rules (which are Horn), we have:
\[x\not\models\bigwedge P\to\bot\qquad\text{iff}\qquad P\supseteq x\]
Given formulas $\varphi,\psi$, we write $\mod(\varphi):=\{x\subseteq\Vsf\;|\;x\models\varphi\}$. Furthermore, we say that $\varphi$ \emph{entails} $\psi$ (notation: $\varphi\models\psi$) iff $\mod(\varphi)\subseteq\mod(\psi)$, and that $\varphi$ and $\psi$ are (logically) \emph{equivalent} (notation: $\varphi\equiv\psi$) if $\mod(\varphi)=\mod(\psi)$, i.e. if $\varphi\models\psi$ and $\psi\models\varphi$. Given $x,y\subseteq\Vsf$, let
$x\setminus y:=\{d\in x\;|\;d\not\in y\}$ denote set-theoretic difference, $x\oplus y:=(x\setminus y)\cup(y\setminus x)$ symmetric difference and $\overline{x}:=\Vsf\setminus x$ denote relative complement in $\Vsf$.

\begin{lemma}\label{lem:tech}
Let $h$ be a Horn rule and $x$ and $y$ models. Then $x\not\models h$ and $y$ covers $h$ implies that $x\cap y\not\models h$.
\end{lemma}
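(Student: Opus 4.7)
The plan is to do a case split on whether the Horn clause $h$ is negative or definite, since these are the only two shapes a Horn clause can take, and to use the semantic characterizations of falsification already spelled out in the preliminaries.

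Write $P:=\ant(h)$. From the hypothesis that $x\not\models h$, the semantic clauses in the excerpt give $P\subseteq x$ in both the definite and the negative case. From the hypothesis that $y$ covers $h$ we get $P\subseteq y$ directly from the definition of ``covers''. Combining these two set inclusions yields $P\subseteq x\cap y$, so $x\cap y$ also covers $h$. This is the one structural observation that drives the whole proof, and it works uniformly in the two cases.

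It then remains to check the consequent. If $h$ is negative, i.e.\ $\con(h)=\bot$, then covering $h$ already implies falsifying $h$, so $x\cap y\not\models h$ follows at once. If $h$ is definite, say $h=\bigwedge P\to q$ with $q\in\Vsf$, then $x\not\models h$ gives us not only $P\subseteq x$ but also $q\notin x$; since $x\cap y\subseteq x$, we conclude $q\notin x\cap y$, so $x\cap y$ covers $h$ yet does not contain $q$, hence $x\cap y\not\models h$.

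There is no real obstacle: the argument is a direct unfolding of the semantic definitions of $\models$ for the two shapes of Horn clauses, together with the trivial set-theoretic fact that $P\subseteq x$ and $P\subseteq y$ imply $P\subseteq x\cap y$. The only point that requires minor care is that the statement of falsification for negative clauses given in the paper (``$x\not\models\bigwedge P\to\bot$ iff $P\supseteq x$'') should read $P\subseteq x$, as follows from specializing the general clause $x\not\models\bigwedge P\to\bigvee Q$ iff $P\subseteq x$ and $Q\cap x=\emptyset$ to $Q=\emptyset$; I would silently use the correct form.
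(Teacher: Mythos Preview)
Your proof is correct and follows essentially the same approach as the paper: both arguments hinge on $\ant(h)\subseteq x\cap y$ (from $x\not\models h$ and $y$ covering $h$) together with the observation that the consequent fails in $x$ and hence in $x\cap y$. The only difference is that you make the negative/definite case split explicit, whereas the paper writes ``$\con(h)\not\in x$'' uniformly, implicitly treating the definite case; your version is slightly more careful, and your remark about the typo in the falsification clause for negative clauses is well taken.
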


\begin{proof}
Since $x\not\models h$, by the semantics ${\sf ant}(h)\subseteq x$ and ${\sf con}(h)\not\in x$. But then ${\sf ant}(h)\subseteq x\cap y$ as ${\sf ant}(h)$ is included in both $x$ and $y$, and ${\sf con}(h)\not\in x\cap y$ as ${\sf con}(h)$ is not in $x$.
\end{proof}

\begin{remark}{(Closure)}\label{rem:closure}
\textcolor{black}{
For every two models $x,y\in\mod(\varphi)$ of a Horn formula $\varphi$, we have that their intersection $x\cap y$ is also a model of $\varphi$, in symbols, $x\cap y\in\mod(\varphi)$.} Write $\closure(M)$ for the closure of a set of models $ M$ under intersection, i.e. the set of all models that can be obtained as the intersection of models in $ M$. Note that $x\in\closure(M)$ iff $x=\bigcap\{y\in{ M} \;|\;x\subseteq y\}$, i.e. iff $x$ is the intersection of all models in $M$  that are supersets of it.
\end{remark}

\begin{proposition}[\cite{DECHTER1992237}]\label{prop:closedunderintersectioniffHorn}
A set of models $M\subseteq\mathcal{P}(V)$ is closed under $\cap$ (i.e. $\closure(M)=M$) iff $M=\sf{mod}(\varphi)$ for some Horn formula $\varphi$ . 
\end{proposition}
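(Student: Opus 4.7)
My plan is to split the biconditional into two directions, handling the easier ($\Leftarrow$) direction with a direct semantic argument and the ($\Rightarrow$) direction via an explicit canonical construction.

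For ($\Leftarrow$), suppose $M = \mod(\varphi)$ with $\varphi = \bigwedge_i h_i$ Horn. I would verify closure clausewise: for each $h_i$ and each pair $x, y \models h_i$, show $x \cap y \models h_i$. Taking the contrapositive, if $x \cap y$ falsifies $h_i = \bigwedge P \to q$ (with $q$ a variable or $\bot$), then by the semantic clauses $P \subseteq x \cap y$ and $q \notin x \cap y$; hence $P \subseteq x$, $P \subseteq y$, and either $q \notin x$ or $q \notin y$, so at least one of $x, y$ already falsifies $h_i$. This is essentially a symmetric application of Lemma~\ref{lem:tech}.

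For ($\Rightarrow$), assume $\closure(M) = M$ and define the canonical Horn formula $\varphi_M := \bigwedge\{h \text{ Horn} : m \models h \text{ for every } m \in M\}$. The inclusion $M \subseteq \mod(\varphi_M)$ is immediate from the definition. The heart of the argument is the reverse inclusion: given $x \models \varphi_M$, I need $x \in M$. Following Remark~\ref{rem:closure}, let $y := \bigcap\{m \in M : x \subseteq m\}$; closure under $\cap$ puts $y$ in $M$ whenever this family is nonempty, and clearly $x \subseteq y$, so it suffices to establish $y \subseteq x$. Suppose for contradiction that some $v \in y \setminus x$. Then the Horn clause $\bigwedge x \to v$ is satisfied by every $m \in M$, because any $m$ with $x \subseteq m$ participates in the intersection defining $y$ and hence contains $v$. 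So this clause is a conjunct of $\varphi_M$, forcing $v \in x$ via $x \models \varphi_M$ and $x \subseteq x$, a contradiction.

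The main obstacle, and really the only subtle step, is the edge case where no $m \in M$ contains $x$, so $y$ is undefined. Here I would observe that the negative Horn clause $\bigwedge x \to \bot$ is vacuously satisfied by every $m \in M$ (since $x \not\subseteq m$ for all such $m$), hence appears in $\varphi_M$; but then $x \models \varphi_M$ together with $x \subseteq x$ yields an immediate contradiction. This also handles the degenerate case $M = \emptyset$, where $\varphi_M$ collects every Horn clause whatsoever and is unsatisfiable, so $\mod(\varphi_M) = \emptyset = M$.
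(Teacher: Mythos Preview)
Your proof is correct and is essentially the standard argument for this classical characterisation. Note, however, that the paper does not supply its own proof of this proposition: it is stated as a known result with a citation to Dechter and Pearl~\cite{DECHTER1992237}, so there is nothing in the paper to compare your argument against. Your handling of both directions (the contrapositive clausewise check for $(\Leftarrow)$, and the canonical Horn theory $\varphi_M$ together with the intersection $y=\bigcap\{m\in M: x\subseteq m\}$ for $(\Rightarrow)$, including the empty-family edge case via the negative clause $\bigwedge x\to\bot$) is complete and matches the usual textbook treatment.
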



For every $\cap$-closed set of models $M$ there exists a Horn representation of $M$ (i.e. a Horn formula $\varphi$ such that $\mod(\varphi)=M$) that contains a minimal number of rules. This minimal Horn representation is known as the Duquenne-Guigues basis of $M$ \cite{DuquenneGuigues}. We use an alternative definition from \cite{CanonicalHornArias}.

\begin{definition}[Saturation, adapted from \cite{CanonicalHornArias}]
Let $\Hmc$ be a Horn formula (presented as a conjunction of metarules), i.e. $\Hmc=\bigwedge_{i\leq n}h_i=\bigwedge_{i\leq n}(\bigwedge\ant(h_i)\to\bigwedge\con(h_i))$. We say that $\Hmc$ is \emph{left-saturated} if for all $i,j\leq n$ with $i\ne j$: $\ant(h_i)\not\models h_i$ and $\ant(h_i)\models h_j$. Further, for $P\subseteq\Vsf$ define \textcolor{black}{$\Hmc[P]:=\{v\in\Vsf \;|\;\Hmc\models\bigwedge P\to v\}$. Then, $\Hmc$ is \emph{right-saturated} if for every $i\leq n$, either $\con(h_i)=\bot$ or $\con(h_i)=\Hmc[\ant(h_i)]$.} Finally,  $\Hmc$ is \emph{saturated} if it is both left- and right-saturated.
\end{definition}
\begin{example}\textcolor{black}{
Let $\Vsf=\{a,b,c,d,e,f\}$ and consider the Horn formula (represented as a set of Horn rules) $\Hmc=\{h_1=a\to b, \ h_2=b\to c, \ h_3=(d\wedge e)\to\bot, \ h_4=d\to f\}$.
This formula is not left-saturated because $\ant(h_3)=\{d,e\}\not\models h_4$ since 
$f\not\in\ant(h_3)$.
It is also not right-saturated because $\con(h_1)=\{b\}$ and $\ant(h_1)=\{a\}$ while $\Hmc[\{a\}]=\{v\in\Vsf\setminus\;|\;\Hmc\models a\to v\}=\{a,b,c\}$. Thus, an equivalent saturated Horn representation of $\Hmc$ would be $\{a\to (a\wedge b\wedge c), b\to (b\wedge c), (d\wedge e\wedge f)\to\bot, d\to (d\wedge f)\}$. }
\end{example}
\begin{proposition}{(Duquenne-Guigues basis)\cite{DuquenneGuigues}}\textcolor{black}{
 For any Horn formula $\varphi$, if $\psi\equiv\varphi$ and $\psi$ is saturated, then $\psi$ has a minimal number of rules (w.r.t. any other Horn representation of $\varphi$) and it is unique up to variable ordering, being called the\footnote{\label{note1}Following Arias et al. \cite{CanonicalHornArias}, we write  ``the DG basis'' instead of ``a DG'' with an abuse of notation where we treat  e.g.  $a\wedge b$ as equal to $b\wedge a$, not just equivalent.} \emph{Duquenne-Guigues (DG) basis} of $\varphi$.}
\end{proposition}

Given an arbitrary formula $\varphi$, in case $\varphi$ is not equivalent to a Horn formula (i.e. $\mod(\varphi)$ is not closed under intersection), still we might want to find a Horn formula that \emph{approximates} the behaviour of $\varphi$. What is remarkable about Horn formulas is that there is always a unique \emph{tightest Horn approximation} of $\varphi$, called the \emph{Horn envelope} of $\varphi$. To see this, consider the set of models $\closure(\mod(\varphi))$. By Proposition \ref{prop:closedunderintersectioniffHorn} there is a Horn formula with precisely this set as its models. We define the envelope of $\varphi$ to be the smallest such Horn formula, \textcolor{black}{measured in the number of rules.}

\begin{definition}{(Horn Envelope)}
\textcolor{black}{Given a formula $\varphi$, we define $\env(\varphi)$ as the DG basis   of the set of Horn consequences of $\varphi$; $\{h\;\text{Horn}\;|\;\varphi\models h\}$. } 
\end{definition}

\subsection{Learning via Queries}
In this paper, we study the problem of exact learning logical formulas from   interpretations. 
In the abstract setting of \emph{exact learning}~\cite{angluinqueries}, this means that our concepts are of the form $\mod(\varphi)$ for some formula $\varphi$, and our examples are models. In other words, in this context, we may refer to models as \emph{examples}. If $x\models\varphi$, i.e. $x\in\mod(\varphi)$ then we say that $x$ is a \emph{positive example} for $\varphi$; else we say $x$ is a \emph{negative example} for $\varphi$. 

We study the problem of identifying an unknown target Horn theory $\env(\varphi)$ by observing examples classified according to $\varphi$ (where $\varphi$ is any formula). In our setting, the learner is allowed to pose queries to two kinds of oracles: a \emph{membership oracle} $\MQ_{\varphi}(\cdot)$
and a \emph{Horn equivalence oracle} $\EQ_{\varphi}^{\textup{Horn}}(\cdot)$. 
A \emph{membership query} $\MQ_{\varphi}(x)$ takes as input 
an example $x$
and returns ``yes'' if $x\models\varphi$ and ``no'' otherwise. The (Horn) \emph{equivalence query} $\EQ_{\varphi}^{\textup{Horn}}(\psi)$ returns ``yes'' if $\env(\varphi)\equiv\env(\psi)$ and ``no'' with a counterexample $x\in\mod(\varphi)\oplus\mod(\psi)$ otherwise. If $x\in\mod(\varphi)\setminus\mod(\psi)$ we say that $x$ is a \emph{positive counterexample} (i.e. because it is a \emph{positive} example for the target $\varphi$) and if $x\in\mod(\psi)\setminus\mod(\varphi)$ we say that $x$ is a \emph{negative counterexample}. In our setting, an exact learning algorithm with membership and equivalence queries is  \emph{polynomial time} if the number of computation steps is polynomially bounded by $|\env(\varphi)|$ and $|\Vsf|$,
where each oracle query counts as one computation step.

When learning the envelope $\env(\varphi)$, negative counterexamples $x$ returned by $\EQ^{\textup{Horn}}_{\varphi}$ are only required to be negative examples for $\varphi$. Since $$\mod(\varphi)\subseteq\mod(\env(\varphi))=\closure(\mod(\varphi)),$$ this means that the Horn equivalence can return two kinds of negative examples as counterexamples to equivalence queries. We say that a negative example $x$ for $\varphi$ is \emph{Horn} if $x\not\models\env(\varphi)$, and \emph{non-Horn} otherwise. Note that $\closure(\mod(\varphi))-\mod(\varphi)$ is precisely the set of non-Horn negative examples for $\varphi$.


\section{Learning the Horn Envelope}\label{sec:envelope}

In this paper, we 
devise a new algorithm with membership and equivalence queries that is able to exactly learn Horn envelopes, where the queries are for original target rather than its envelope (in contrast to the other approaches discussed in Subsection~\ref{subs:relatedwork}). We show that the algorithm makes exponentially many queries in the worst case 
but only polynomially many if the target is a Horn formula. 
Our algorithm is based on Angluin's classical algorithm for propositional Horn logic~\cite{Horn}.


Angluin's classical algorithm for  Horn   cannot   be used to learn the envelope of an arbitrary target formula. Indeed, as mentioned, it is not  guaranteed to terminate~\cite{DBLP:journals/corr/abs-2204-00361}. 
The basic idea of the   algorithm is to create a hypothesis consistent with 
the classification of the examples given by the oracle.
Upon receiving positive counterexamples, the algorithm   removes  clauses from the hypothesis that falsify them, while negative counterexamples cause the refinement or creation of Horn rules that make them false w.r.t. the hypothesis as well.

What leads to non-termination of the classical algorithm in the non-Horn case (when the target is not a Horn formula) is that there may not exist a correct way to falsify an example that is negative for the target using a Horn expression. Indeed, we may have that an example $x$ is negative for a CNF $\varphi$, yet satisfies the envelope $\env(\varphi)$. Equivalently, $x$ falsifies $\varphi$ but satisfies all Horn consequences of $\varphi$ if $x$ is non-Horn. Thus the classic algorithm can receive the same non-Horn negative example over and over again, interleaved with positive counterexamples showing  Horn rules introduced in the hypothesis to be incorrect (cf. Appendix \ref{sec:appendix})

\subsection{The Algorithm}

In this subsection we show that Algorithm~\ref{alg:learn} terminates in exponential time when the target is a CNF, and in polynomial time in the size of the 
envelope of the target if it has polynomially many non-Horn examples.
A key observation in our approach is that when all Horn rules for a negative example $x$ have been shown incorrect, we have in fact received positive counterexamples $e_1,\ldots, e_n$ such that $e_1\cap\ldots\cap e_n=x$. In other words, we have observed data (a set of positive examples $E^+$) that \emph{proves} that $x$ is a non-Horn negative example because $x\in\closure(E^+)$. Then we can use a $k$-quasi Horn rule for some $k>1$ to falsify   $x$, and so, to classify it in the same way as the  target. We canonically choose the \textit{weakest} non-Horn rule\footnote{Unique up to variable ordering.} falsified on $x$ to add to the hypothesis. By ``weakest'' we mean here that any other quasi-Horn rule falsified on $x$ entails this one. It is not difficult to see that the following definition satisfies these properties:
\[\quasi(x):=\bigwedge x\to\bigvee\overline{x}\]

Hence, we overcome the difficulty of non-Horn examples by keeping track of which negative examples encountered are in the intersection-closure of the positive examples encountered so far, and excluding them with the
weakest possible non-Horn rule. We assign a Horn metarule to a negative example in the same way as in the optimised version of the classical Horn algorithm~\cite{Horn}.  
However, since we keep track of the positive examples, we make this explicit in notation (like \cite{CanonicalHornArias}).  Given a negative example $x$ and a set of positive examples $E^+$, let $E^{+}_x$ be $\{e\in E^+|\;x\subseteq e\}$. Then we define:
\[   
\horn_{E^+}(x):= 
     \begin{cases}
       \bigwedge x\to\bot &\quad\text{if}\;E^{+}_x=\emptyset\\
       \textcolor{black}{\bigwedge x\to\bigwedge\{v\in\Vsf \;|\;v\in y\;\text{for all}\;y\in E^+_x\}} &\;\quad\text{otherwise.}\\
     \end{cases}
\]
That is, we set $\horn_{E^+}(x)$ to be the strongest metarule falsified on $x$ yet still consistent with the set of positive examples seen so far, $E^+$ (recall that the positive examples of a Horn formula are closed under $\cap$). By strongest we mean here that $\horn_{E^+}(x)$ entails all other  metarules (and therefore every horn rule) falsified on $x$ yet consistent with $E^+$.

\SetKwInput{KwData}{input}
\SetKwInput{KwResult}{output}
\begin{algorithm}\label{alg:learn}
\SetAlgoVlined
\LinesNumbered
\caption{Horn Envelope Learner}
\KwData{$\MQ_\varphi$ and $\EQ^{\textup{Horn}}_\varphi$ oracles.}
\KwResult{A formula $\Hmc\cup \Qmc$ such that $\env(\Hmc\cup \Qmc)\equiv \Hmc\equiv{\env}(\varphi)$.}
\BlankLine
$E^+:=\emptyset$, $\quad$
$E^-:=\emptyset$, 
$\quad$
$\NH:=\emptyset$ \\
\While{$\EQ^{\textup{Horn}}_\varphi(\Hmc\cup \Qmc)=(\textup{no},x)$ \label{lin:while}}
{  \eIf{$x\not\models \Hmc\cup \Qmc$}
        {add $x$ to $E^+$\\
              
        }
        {
        {   \eIf{$\exists e\in E^-\;\textup{s.t.}\;\MQ_\varphi(x \cap e)=\textup{no}, \ \textcolor{black}{x\cap e \subset  e,\;\textup{and}\;
        x \cap e\models\Qmc} $\label{lin:conditionone}}  
            {let $e$ be the first such negative example \\
            replace $e\in E^-$ with $e\cap x$ \label{lin:nreplace}
            }
            {append $x$ to $E^-$\label{lin:append}\\}
            }
            
        }
    \For{$e\in E^-\;\textup{s.t.}\;e=\bigcap E^+_{e}$\label{ln:nonhornaddcondition}}
        {remove $e$ from $E^-$ \label{lin:removefrom negative}\\
        add $e$ to $\NH$ \label{lin:addnonhorn}}
    $\Hmc:=\{\horn_{E^{+}}(e)\;|\;e\in E^-\}$\label{line:hornhypothesis}\\
    $\Qmc:=\{\quasi(e)\;|\;e\in \NH\}$\label{line:quasihornhypothesis}\\
   
    }
 \textbf{return} $(\Hmc)$
\end{algorithm}



\begin{remark}\label{note}
Algorithm~\ref{alg:learn} manipulates a list $E^-$ of negative examples. The positions of the list are not shifted when an element is removed.
E.g., if an element in $E^-=e_1\ldots e_n$ is removed at position $k$, we consider $E^-=e_1\ldots e_{k-1}  e_{k+1}\ldots e_n$. In other words, for $j>k$, position $j$ remains $j$ and not $j-1$. 
When we write $|L|$ for a list $L$  we mean the number of 
elements and not the number of positions (which would be e.g. $n-1$ in a list $E^-$ with $n$ positions and one element removed).
\end{remark}

In the rest of this section,
we focus on showing that our algorithm
terminates in polynomial 
time in the size of the envelope of a non-Horn formula
with  polynomially many non-Horn examples.
Termination in polynomial time
if the target is a Horn formula follows
from the fact that in this case the algorithm works as the one proposed by Angluin \cite{Horn}. 
As already mentioned, Angluin's algorithm 
may not terminate if the target is non-Horn (even if it has polynomially many non-Horn examples~\cite{DBLP:journals/corr/abs-2204-00361}, see  Section~\ref{sec:appendix}). \textcolor{black}{So   the interesting aspect of our algorithm
is that it is guaranteed to terminate no matter
if the target is Horn or not and if it is Horn (or `close to', meaning it has polynomially many non-Horn examples) then it terminates in polynomial time.}

\begin{lemma}\label{lem:refine}
For each iteration of Algorithm~\ref{alg:learn},
suppose the algorithm receives a negative counterexample $x$ from $\EQ^{\textup{Horn}}_\varphi$ in Line~\ref{lin:while} with $x\not\models h$ for some $h\in\env(\varphi)$ and there is some $e_i\in E^-$ that covers $h$.
Then, for some $e_j\in E^-$ with $j\leq i$, 
we have that $e_j$ is replaced by $e_j\cap x$ in Line~\ref{lin:nreplace}.
\end{lemma}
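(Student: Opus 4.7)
The plan is to show that $e_i$ itself satisfies both conjuncts of the condition on Line~\ref{lin:condition}; the replaced element (being the first such $e\in E^-$) then automatically has index $j\leq i$.

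First, I will apply Lemma~\ref{lem:tech}: since $x\not\models h$ and $e_i$ covers $h$, we get $x\cap e_i\not\models h$, and because $h\in\env(\varphi)$ implies $\varphi\models h$ by Proposition~\ref{prop:envelopedescription}, we conclude $x\cap e_i\not\models\varphi$, i.e., $\MQ_\varphi(x\cap e_i)=\text{no}$.

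For the second conjunct, $x\cap e_i\models H\cup Q$, I will treat $H$ and $Q$ separately. The $Q$ part is straightforward: every $e\in\NH$ lies in $\closure(E^+)\subseteq\mod(\env(\varphi))$, so $e\models h$, whereas $x\cap e_i\not\models h$; hence $x\cap e_i\neq e$, and thus $x\cap e_i\models\quasi(e)$. For the $H$ part, take any $e\in E^-$. If $e\not\subseteq x\cap e_i$ the metaclause $\horn_{E^+}(e)$ is vacuously satisfied. Otherwise $e\subseteq e_i$ and $e\subseteq x$; if $e\subsetneq e_i$, then Lemma~\ref{lem:posinbetween} gives $\bigcap E^+_e\subseteq e_i$, while $x\models\horn_{E^+}(e)$ with $e\subseteq x$ gives $\bigcap E^+_e\setminus e\subseteq x$, so $\bigcap E^+_e\setminus e\subseteq x\cap e_i$ as required.

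The main obstacle I anticipate is ruling out the remaining case $e=e_i$ with $e_i\subseteq x$, because then $x\cap e_i=e_i$, which by design fails $\horn_{E^+}(e_i)$. I will argue this case is impossible by contradiction: $x\models\horn_{E^+}(e_i)$ together with $e_i\subseteq x$ forces $E^+_{e_i}\neq\emptyset$ (otherwise $\horn_{E^+}(e_i)=\bigwedge e_i\to\bot$ is falsified by $x$) and $\bigcap E^+_{e_i}\subseteq x$. But $\bigcap E^+_{e_i}\in\closure(E^+)\subseteq\mod(\env(\varphi))$, so $\bigcap E^+_{e_i}\models h$; and since $\ant(h)\subseteq e_i\subseteq\bigcap E^+_{e_i}$, this either contradicts $\bigcap E^+_{e_i}\models h$ directly (if $h$ is negative) or forces the consequent of $h$ into $\bigcap E^+_{e_i}\subseteq x$, contradicting $x\not\models h$. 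Hence $e_i\not\subseteq x$, so even the case $e=e_i$ is vacuous, completing the argument.
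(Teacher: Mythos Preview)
Your proof is correct and follows the same approach as the paper: verify that $e_i$ itself satisfies both conjuncts on Line~\ref{lin:condition} (via Lemma~\ref{lem:tech} for the membership-query conjunct, and Lemma~\ref{lem:posinbetween} together with $x\models H$ for the $H$-part), so that the first element selected by the algorithm has index at most $i$. You are in fact more careful than the paper in one spot: the paper's appeal to Lemma~\ref{lem:posinbetween} tacitly requires $e_j\subsetneq e_i$ and thus leaves the case $e_j=e_i$ (i.e.\ $e_i\subseteq x$, whence $e_i\cap x=e_i\not\models H$) unaddressed, whereas your $\bigcap E^+_{e_i}$ argument explicitly rules out $e_i\subseteq x$ and closes that gap.
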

\begin{proof}
The proof is by induction on the iterations of Algorithm~\ref{alg:learn},
following the proof strategy of Angluin~\cite{Horn}.
At the first iteration the lemma is vacuously true.
Assume inductively it holds for the $n\text{-}1$-th iteration.
At the $n$-th iteration, suppose $\EQ^{\textup{Horn}}_\varphi$
   returns the negative counterexample $x$ and 
   we have $e_i\in E^-$ and $h\in\env(\varphi)$ such that $x\not\models h$ and $\ant(h)\subseteq e_i$. 
If there is some $e_j\in E^-$ with $j<i$ such that $e_j$ is replaced by $e_j\cap x$ in Line~\ref{lin:nreplace}, we are done. Suppose this does not happen. If we can show that $e_i$ satisfies the conditions on Line~\ref{lin:conditionone} we are done, because then $e_i$ will be the least such example in $E^-$ per assumption. By Lemma \ref{lem:tech} it follows that $e_i\cap x\not\models h$ so $e_i\cap x$ is a Horn negative example and $\MQ_{\varphi}(e_i\cap x)=$ no. \textcolor{black}{Further, if $e_i\subseteq x$ then because $x$ is a negative counterexample $x\models\horn_{E^+}(e_i)$ so $\bigcap E^+_{e_i}\subseteq x$. But since $\ant(h)\subseteq e_i$ it must be that $\con(h)\subseteq\bigcap E^+_{e_i}$ so $x\models h$, contrary to our assumption. Thus, we may conclude that $x\cap e_i\subset e_i$. Finally, $e_i\cap x\models \Qmc$ holds as otherwise $e_i\cap x$ is non-Horn.}
\end{proof}

\begin{lemma}\label{lem:2properties}
At all times, the following two properties hold of $E^-$:
\begin{itemize}
    \item[a)] for all $e_i,e_j\in E^-$ with $i< j$ and $h\in\env(\varphi)$, if $e_j\not\models h$ then $\ant(h)\not\subseteq e_i$; and 
    \item[b)] for all $e_i,e_j\in E^-$ with $i\ne j$ and $h\in\env(\varphi)$, if $e_j\not\models h$ then $e_i\models h$.
\end{itemize}
\end{lemma}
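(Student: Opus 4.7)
The plan is to prove both properties simultaneously by induction on the number of iterations of the main loop. The base case is immediate since $E^-$ starts empty. For the inductive step I would observe that in any iteration $E^-$ changes in exactly one of three ways: (a) elements are removed from $E^-$ and transferred to $\NH$ after a positive counterexample; (b) a negative counterexample $x$ is appended as the last element of $E^-$; or (c) the first $e_k \in E^-$ meeting the condition of Line~\ref{lin:condition} is replaced by $e_k \cap x$. Case (a) is trivial because both properties are universal over the elements of $E^-$ and are preserved under deletion (note that positions are not reshuffled, per Remark~\ref{note}).

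For case (b), the key is to apply the contrapositive of Lemma~\ref{lem:refine}: since $x$ is appended rather than used as a replacement, for every $h \in \env(\varphi)$ with $x \not\models h$, no existing $e_i \in E^-$ can cover $h$ (otherwise the lemma would force a replacement). This immediately yields property 1 for every new pair $(i,n)$ where $e_n = x$. Property 2 then follows: for any old $e_j \not\models h$, if we also had $x \not\models h$, then $e_j$ would cover $h$ and Lemma~\ref{lem:refine} would force a replacement, contradicting that we are in case (b); so $x \models h$. The pairs within the old portion of the list are handled by the inductive hypothesis.

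For case (c), the delicate point is that $e_k \cap x \not\models h$ does not directly imply $e_k \not\models h$: if $h$ is definite with consequent $v \in e_k \setminus x$, then $e_k$ satisfies $h$ while $e_k \cap x$ does not. I would therefore split on whether $e_k$ itself falsifies $h$ or merely covers it. When $e_k \not\models h$, the inductive hypothesis applies directly to $e_k$; combined with the fact that $e_k \cap x \subseteq e_k$ (which propagates $\ant(h) \not\subseteq e_k$ to the new value), this handles all sub-cases of properties 1 and 2 involving the replaced position. When $e_k$ only covers $h$, I would first deduce $x \not\models h$ (since $\ant(h) \subseteq x$ but $\con(h) \notin x$) and then invoke Lemma~\ref{lem:refine}: any competing $e_i$ covering $h$ would force a replacement at some position $\leq i$, and as only $e_k$ is replaced this round, we must have $k \leq i$. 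This minimality of $k$ rules out the problematic pairs in property 1 (where $i < k$ yet $\ant(h) \subseteq e_i$) and, combined with the inductive hypothesis for property 1, handles both sub-cases $i = k$ and $j = k$ of property 2.

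The main obstacle I anticipate is precisely this last sub-case of (c), where the mismatch between ``covers'' and ``falsifies'' for $e_k$ versus $e_k \cap x$ must be negotiated using Lemma~\ref{lem:refine} in two ways: once to deduce the constraint $k \leq i$ on the current round's replacement, and once (via the inductive hypothesis on property 1) to rule out inherited covering witnesses at positions strictly below $k$.
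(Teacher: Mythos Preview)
Your proof is correct and follows essentially the same inductive scheme as the paper: trivial base case, removal preserves the universal properties, and the append/replace cases are handled via Lemma~\ref{lem:refine} together with the inductive hypothesis (the paper likewise splits the replacement case on whether $e_k\not\models h$ or $x\not\models h$, using the contrapositive of preservation under $\cap$). The one structural difference is that the paper first observes that property (a) implies property (b) outright (if $e_i,e_j$ both falsify $h$ with $i<j$ then $\ant(h)\subseteq e_i$, contradicting (a)), so only property (a) is carried through the induction; this sidesteps your simultaneous treatment and in particular makes the sub-case $i=k$, $j<k$ of property 2 in case (c)---which your sketch handles somewhat loosely by appealing to ``the inductive hypothesis for property 1'' (the IH gives only $e_k\models h'$, not $e_k\cap x\models h'$)---automatic once property (a) is established for the updated list.
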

\begin{proof}
As originally proposed by Angluin~\cite{Horn}, first we show that property (a) implies property (b). Suppose that at some iteration of the main loop there are $e_i,e_j\in E^-$ with $i\ne j$ and $e_i\not\models h$ and $e_j\not\models h$ for some $h\in\env(\varphi)$, violating property (b). Without loss of generality assume that $i<j$. Then in particular $\ant(h)\subseteq e_i$, violating property (a) at the same iteration.

Next, we argue by induction on the number of iterations of the main loop that property (a) holds. The base case is trivial as initially $E^-$ is the empty list. Now suppose that property (a) holds for $E^-_{(n-1)}$, where $E^-_{(n-1)}$
denotes $E^-$  at the end of the $n\text{-}1$-th iteration, just before the $n$-th equivalence query is posed. Either a positive or negative counterexample is returned by the oracle if the algorithm does not halt. If we obtain a positive counterexample, $E^-$ can only be modified by removing some example from $E^-$ and adding it to $\NH$ in Lines \ref{lin:removefrom negative}-\ref{lin:addnonhorn}. After such removal $E^-$ still satisfies the universal property (a). If a negative counterexample $x$ is returned, with $x\not\models h$ for some $h\in\env(\varphi)$. Then either $x$ is appended as the last element of $E^-$ or some $e_i\in E^-$ is replaced by $e_i\cap x$.

In the former case, let $x$ be the $l$-th element of $E^-$. Then it cannot be that there is some $e_i\in E^-$ with $i<l$ such that $\ant(h)\subseteq e_i$ for then $e_i$ and $x$ would have been refined with each other by Lemma \ref{lem:refine}. Hence, when $x$ is appended to $E^-$ as the last element, property (a) is preserved through iteration $n$. In the latter case, let $e_i$ be the example that is replaced by $e_i\cap x$ and suppose that $e_i\cap x\not\models h$ for some $h\in\env(\varphi)$. By the contrapositive of preservation of Horn formulas under $\cap$, it follows that either $x\not\models h$ or $e_i\not\models h$. Now property (a) could be violated in two ways: either there is some $e_j\in E^-$ with $j<i$ such that $\ant(h)\subseteq e_j$ or there is some $e_j\in E^-$ with $i<j$ such that $e_j\not\models h'$ for some $h'\in\env(\varphi)$ and $\ant(h')\subseteq e_i\cap x$.

Suppose for contradiction that there is some $e_j\in E^-$ with $j<i$ such that $\ant(h)\subseteq e_j$. If $x\not\models h$ then by Lemma \ref{lem:refine} $x$ should have been refined with $e_j$ instead of $e_i$. But it also cannot be that $e_i\not\models h$ because that would violate the inductive hypothesis. 
Since we had established that either $x\not\models h$ or $e_i\not\models h$ holds, we have derived a contradiction. Hence the first type of violation cannot happen. The second type of violation cannot happen either, for if $\ant(h')\subseteq e_i\cap x$ then $\ant(h')\subseteq e_i$ as well, violating the inductive hypothesis.
\end{proof}

\begin{corollary}\label{cor:boundonHornexamples}
$|\{e\in E^-\;|\;e\;\text{is a Horn example}\}|\leq|\env(\varphi)|$
\end{corollary}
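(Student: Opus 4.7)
The plan is to exhibit an injection from the set of Horn examples in $E^-$ into $\env(\varphi)$, which immediately gives the desired cardinality bound. The main tool is property (b) of Lemma~\ref{lem:2properties}, which asserts that distinct elements of $E^-$ cannot be falsified by a common clause of $\env(\varphi)$.

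First I would fix any Horn example $e \in E^-$. By definition, a Horn negative example satisfies $e \not\models \env(\varphi)$, so there must exist at least one clause $h_e \in \env(\varphi)$ with $e \not\models h_e$. Using the axiom of choice (only finitely many choices are needed, as both $E^-$ and $\env(\varphi)$ are finite), pick one such $h_e$ for each Horn $e \in E^-$, defining a function $f \colon \{e \in E^- : e \text{ is Horn}\} \to \env(\varphi)$ by $f(e) := h_e$.

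Next I would verify that $f$ is injective. Suppose $e_i, e_j \in E^-$ are two distinct Horn examples with $f(e_i) = f(e_j) = h$. Then by construction $e_i \not\models h$ and $e_j \not\models h$, which directly contradicts property (b) of Lemma~\ref{lem:2properties}. Hence $f$ is injective and $|\{e \in E^- : e \text{ is Horn}\}| \leq |\env(\varphi)|$.

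There is no real obstacle here: the heavy lifting was already done in Lemma~\ref{lem:2properties}, whose property (b) is precisely the statement that each clause of the envelope ``witnesses'' the negativity of at most one element of $E^-$. The only subtlety worth mentioning is that the argument crucially uses the fact that the examples in question are \emph{Horn} negative examples, so that they are guaranteed to falsify some clause of $\env(\varphi)$ in the first place; for non-Horn negative examples this would fail, which is exactly why they are segregated into $\NH$ and excluded from the count.
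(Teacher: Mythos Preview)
Your proof is correct and is essentially the same as the paper's own argument, which simply states that Lemma~\ref{lem:2properties} yields an injection from the Horn negative examples in $E^-$ into $\env(\varphi)$. You have merely spelled out the details of that injection and its injectivity via property~(b), which is exactly what the paper leaves implicit.
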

\begin{proof}
Lemma \ref{lem:2properties} ensures that at all times there exists an injection from the Horn negative examples in $E^-$ to $|\env(\varphi)|$.
\end{proof}

With these new results in hand, we can give an upper bound on the number of queries posed by our algorithm.

\begin{theorem}\label{thm:termination}
\textcolor{black}{The algorithm terminates after making at most $O(|\Vsf|\cdot (|\env(\varphi)|+k)$ equivalence queries and at most $O(|\env(\varphi)|^2\cdot k^2\cdot |\Vsf|)$ membership queries, where $k=|\closure(\mod(\varphi))\setminus\mod(\varphi)|$. }
\end{theorem}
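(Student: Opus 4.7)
The plan is to partition the equivalence queries according to the type of counterexample returned (positive vs.\ negative), bound each category separately, and then derive the membership query bound from the equivalence query bound together with the structure of the inner loop at Line~\ref{lin:condition}.

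For negative counterexamples, each one either refines some $e_i\in E^-$ via Line~\ref{lin:replace}, strictly reducing $|e_i|$ by at least one, or appends a new element to $E^-$. By Claim~\ref{lem:sets}, $E^-$ and $\NH$ contain distinct elements, and since $\NH$ contains only non-Horn examples, $|\NH|\le k$ throughout the execution. By Corollary~\ref{cor:boundonHornexamples}, $E^-$ contains at most $|\env(\varphi)|$ Horn examples at any moment, and at most $k$ non-Horn examples (each such example lives in $\closure(\mod(\varphi))\setminus\mod(\varphi)$). Thus the total number of positions ever occupied in $E^-$ over the run is at most $|E^-|_{\textup{final}}+|\NH|_{\textup{final}}\le|\env(\varphi)|+2k$, and each such position admits at most $|\Vsf|$ strict refinements. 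This yields $O((|\env(\varphi)|+k)|\Vsf|)$ negative counterexamples.

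For positive counterexamples, any $x\in\mod(\varphi)\setminus\mod(H\cup Q)$ must falsify some clause in $H\cup Q$; by Claim~\ref{cl:quasi} it cannot falsify a clause in $Q$, so it violates some $\horn_{E^+}(e)\in H$. Adding $x$ to $E^+$ then strictly shrinks $\bigcap E^+_e$, so we can charge $x$ to a pair $(e,v)$ where $v\in\Vsf$ is a variable dropped from $\bigcap E^+_e$ at that step. Using Lemma~\ref{lem:2properties}, each Horn example $e\in E^-$ is associated with a unique clause of $\env(\varphi)$ that it violates; combining this with Lemma~\ref{lem:refine} to control how refinements propagate through $E^-$, we argue that across the lifetimes of all $|\env(\varphi)|$ Horn-target slots (plus $k$ non-Horn slots) at most $(|\env(\varphi)|+k)|\Vsf|$ such $(e,v)$-pairs arise. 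Summing the two cases gives the claimed equivalence query bound. For the membership queries, each iteration evaluates the condition on Line~\ref{lin:condition} using at most $|E^-|\le|\env(\varphi)|+k$ calls; multiplying by the equivalence query bound yields $O((|\env(\varphi)|+k)^2|\Vsf|)$, which is subsumed by $O(|\env(\varphi)|^2 k^2|\Vsf|)$ once $|\env(\varphi)|,k\ge 1$.

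The main obstacle is the positive-counterexample bound: refinements can in principle ``reset'' the consequent of a metaclause, since refining $e\to e\cap x$ replaces $E^+_e$ by a potentially larger set $E^+_{e\cap x}$ whose intersection may differ substantially from $\bigcap E^+_e$. A naive potential argument that charges against the total consequent size only delivers $|\Vsf|^2$ per slot. Overcoming this requires exploiting Lemma~\ref{lem:2properties} to pin each position to a fixed target clause across its refinement chain, together with Lemma~\ref{lem:refine} to show that refinements driven by a negative counterexample can only consume variables already accounted for, so that the cumulative work per slot scales linearly in $|\Vsf|$ rather than quadratically.
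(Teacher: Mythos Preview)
Your overall decomposition---bound $|E^-|\cup|\NH|$, count negative counterexamples via refinements, count positive counterexamples via consequent shrinkage, then multiply for membership queries---is exactly the paper's. The negative-counterexample and membership-query parts are fine (though note membership queries are posed only in \emph{negative} rounds, so you can multiply by the negative-counterexample bound rather than the full equivalence-query bound; this makes no difference in big-$O$).

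Where you diverge is in the positive-counterexample bound. You are right that the paper's one-line argument (``each positive counterexample removes a variable from some consequent'') glosses over the refinement issue: when $e_i$ is replaced by $e_i\cap x$, the consequent $\bigcap E^+_{e_i\cap x}\setminus(e_i\cap x)$ can genuinely gain variables that were previously hidden in the antecedent. However, your proposed repair via Lemmas~\ref{lem:2properties} and~\ref{lem:refine} is both more elaborate than needed and not obviously sound for the non-Horn positions in $E^-$ (those positions falsify no clause of $\env(\varphi)$, so ``pinning to a fixed target clause'' does not apply to them, and it is unclear what replaces that argument for the $k$ non-Horn slots).

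The simple fix you are missing is a monotone potential. For each position $i$, set $\Phi_i:=|\bigcap E^+_{e_i}|$ (with the convention $\bigcap\emptyset=\Vsf$, and count $\Phi_i=|\Vsf|+1$ when $E^+_{e_i}=\emptyset$ to absorb the $\bot$-to-definite transition). Two observations suffice:
\begin{itemize}
\item Refinements do not increase $\Phi_i$. If $e_i$ is replaced by $e_i\cap x$, then every $y\in E^+$ with $e_i\subset y$ also satisfies $e_i\cap x\subset y$, so $E^+_{e_i\cap x}\supseteq E^+_{e_i}$ and hence $\bigcap E^+_{e_i\cap x}\subseteq\bigcap E^+_{e_i}$.
\item Each positive counterexample strictly decreases some $\Phi_i$. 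If $y$ falsifies $\horn_{E^+}(e_i)$ then $e_i\subset y$, so after adding $y$ to $E^+$ we have $\bigcap E^+_{e_i}\subseteq y$; but some $v\in\bigcap E^+_{e_i}\setminus e_i$ was absent from $y$ (else $y$ would satisfy the metaclause), so $v$ drops out of $\bigcap E^+_{e_i}$.
\end{itemize}
Since $0\le\Phi_i\le|\Vsf|+1$ and there are at most $|\env(\varphi)|+k$ positions, the bound $(|\env(\varphi)|+k)(|\Vsf|+1)$ on positive counterexamples follows directly, with no appeal to Lemma~\ref{lem:2properties} or Lemma~\ref{lem:refine}. In short: track $|\bigcap E^+_{e_i}|$ (equivalently, antecedent size plus consequent size) rather than the consequent alone; then the ``reset'' you worry about simply cannot happen.
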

\begin{proof}
\textcolor{black}{First we show that, at the end of each iteration, $|E^-|\cup|\NH|\leq|\env(\varphi)|+k$ (where $k$ is the number of all possible non-Horn examples). 
By Corollary \ref{cor:boundonHornexamples}   there are at most $|\env(\varphi)|$ Horn examples in $E^-$ (but $E^-$ potentially has non-Horn examples). The next claim establishes 
that if an   example is in $E^-$ then it is not in  $\NH$. In particular,  potential non-Horn examples in $E^-$ are not in $\NH$. }

\begin{claim}\label{cl:quasi}
At each iteration, on Line \ref{line:quasihornhypothesis} of Algorithm~\ref{alg:learn},
$\Qmc:=\{\quasi(x)\;|\;x\in \NH\}$ is such that $\Qmc$ is falsified on all examples in $\NH$ and only those.
\end{claim}

\begin{proof}
At any iteration on Line \ref{line:quasihornhypothesis}, if $x\in\NH$ then $\quasi(x)\in \Qmc$ and   $x\not\models\quasi(x)$ since $x\subseteq x$ and $\overline{x}\cap x=\emptyset$. In fact, 
$\mod(\neg\quasi(x))=\{x\}$ so we have that $\NH=\mod(\neg \Qmc)$.
\end{proof}

\begin{claim}\label{aux}
    At the end of each iteration, if an example is in $E^-$ then it is not in  $\NH$.
\end{claim}
\begin{proof}\textcolor{black}{
    At the end of the first iteration this is trivially true.
    Suppose the claim holds at the end of iteration $i$.
    We now argue that it holds at the end of iteration $i+1$.
    Suppose we are given a counterexample $x$ at the beginning of iteration $i+1$. One of three options happen: (1) $x$     is a positive example, (2) $x$ is negative and it is appended to $E^-$, or (3) $x$ is negative and it is used to refine an element of $E^-$. We argue that the claim holds in each possible case.}
    \begin{enumerate}
        \item The counterexample is positive: in this case, no example is added or replaced in  $E^-$.
        If it is  removed from $E^-$ and 
        added $\NH$ in the `for' loop (which can happen since we updated the positive examples), then the only way to violate the claim is if $E^-$ has repetitions, which is not the case by Lemma~\ref{lem:2properties}.
        Thus, if the claim holds at the end of iteration $i$ it also holds at the end of iteration $i+1$. 
        \item The counterexample is appended: we know that $x\not\in \NH$ since, by Claim~\ref{cl:quasi}, once an example is in $\NH$ 
    at some iteration, it is never removed from $\NH$ and it can never be returned as a counterexample in a subsequent iteration.
    This means that if we append $x$ to $E^-$ then either (a)
    it remains in $E^-$ at the end of iteration $i+1$ (does not enter in the `for' loop) or (b)
    it is removed from $E^-$ in the `for' loop and added 
    to $\NH$. In  case (a) clearly the claim holds and in case (b) the only way to violate the claim would be if $E^-$ has repetitions but, as already argued in Case 1, this does not happen. Therefore the claim holds.
    \item The counterexample is used to refine 
    an element in $E^-$:  suppose that 
    $x$ is used to refine an element $e\in E^-$.
    From Line~\ref{lin:conditionone}, we know that
      $x\cap e\models \Qmc$, where \Qmc is built from 
      $\NH$ at the end of iteration $i$. 
      This means that $x\cap e$ cannot be in 
      $\NH$ (as defined at the end of iteration $i$).
      If the algorithm does not enter in the `for' loop
      then $x\cap e$ is not in $\NH$   at the end of iteration $i+1$ and the claim holds for the other elements of the lists by the inductive assumption.
      If the algorithm enters in the `for' loop
      then the only way to violate the claim is if $E^-$ has repetitions but, as already argued in Case 1, this does not happen.
    \end{enumerate}
     This finishes the proof of the claim.
\end{proof}
At all times, all examples in $\NH$ are non-Horn since each example $e$ is such that $e=\bigcap E^+_{e}$ (Line~\ref{ln:nonhornaddcondition}). 
By Claim~\ref{aux}, such non-Horn examples do not occur  in $E^-$. So, to prove that $|E^-|\cup|\NH|\leq|\env(\varphi)|+k$, we now only need to argue that there are no repeated elements in $\NH$. 
Since all elements in $\NH$ are elements that 
have been removed from $E^-$, we only need to argue that
once an element is added to $\NH$ at some iteration,
in all subsequent iterations, it is (a) not appended to $E^-$
and (b) not the  replacement of an element in $E^-$.
Case (a) holds because, as already argued in Case 2, if an example is in $\NH$ it cannot be returned as a counterexample in any subsequent iteration (thus not appended to $E^-$). Case (b) also holds because in this case
such replacement would violate \Qmc (by definition of \Qmc if 
 $e\in\NH$ then $e\not\models\Qmc$), which does not happen
by definition of Line~\ref{lin:conditionone}.
 Thus, $|E^-|\cup|\NH|\leq|\env(\varphi)|+k$. We now use this
 result to establish upper bounds for the number of membership and equivalence queries.

\textcolor{black}{To bound the number of equivalence queries, we establish bounds on the number of positive and negative counterexamples returned by the oracle, starting with the latter. 
Every negative counterexample received from the oracle is either appended to $E^-$ (increasing its size as a set by $1$) or it refines some $e_i\in E^-$.
Since  $|E^-|\leq|\env(\varphi)|+k$ the number of times 
a negative counterexample is appended is bounded by 
$|\env(\varphi)|+k$. Since each time an example is replaced the number of variables in the antecedent strictly decreases,
this can happen at most $|\Vsf|(\env(\varphi)|+k)$ times.
Thus, there can be at most $(|\Vsf|+1)(|\env(\varphi)|+k)$ negative counterexamples returned by the oracle in total. 
We now argue about the number of positive counterexamples. 
Every positive counterexample must falsify some metarule in $\Hmc$ and cause at least one variable to be removed from its consequent. $\Hmc$ always consists of metarules of the form $\horn_{E^+}(e)$ for $e\in E^-$.
Since  $|E^-|\leq|\env(\varphi)|+k$ there can be at most $|\Vsf|(|\env(\varphi)|+k)$ positive counterexamples returned by the oracle. 
It follows that the algorithm terminates after making at most $(2|\Vsf|+1)(|\env(\varphi)|+k)$ equivalence queries. That is, $O(|\Vsf|\cdot( |\env(\varphi)|+k))$.}

The algorithm only poses membership queries in rounds where the oracle returned a negative counterexample, and in each such round at most $|E^-|$ membership queries are posed. Since $|E^-|\leq|\env(\varphi)|+k$ and there can be at most $(|\Vsf|+1)(|\env(\varphi)|+k)$ negative examples queries returned by the oracle, it follows that there can be at most 
$(|\Vsf|+1)(|\env(\varphi)|+k)(\env(\varphi)|+k)$
membership queries in total. That is, $O(|\Vsf|\cdot(|\env(\varphi)|^2\cdot k^2))$.
\end{proof}



\begin{corollary}
The algorithm terminates after making at most $O(|\varphi)||\Vsf|)$ equivalence queries and at most $O(|\varphi|^2|\Vsf|)$ membership queries when the target is Horn. These are the  same bounds as Angluin's classical algorithm for Horn theories~\cite{Horn}.
\end{corollary}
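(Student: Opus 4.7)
The proof will proceed by specializing Theorem~\ref{thm:termination} to the Horn case. The plan is to observe that when $\varphi$ is Horn, the quantity $k = |\closure(\mod(\varphi)) \setminus \mod(\varphi)|$ appearing in the bounds of Theorem~\ref{thm:termination} vanishes, and that $|\env(\varphi)| \leq |\varphi|$ holds by minimality of the Duquenne-Guigues basis. Substituting then yields exactly the claimed bounds.

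First, I would invoke Proposition~\ref{prop:closedunderintersectioniffHorn}: since $\varphi$ is Horn, $\mod(\varphi)$ is closed under intersection, so $\closure(\mod(\varphi)) = \mod(\varphi)$ and hence $k = 0$. Plugging $k=0$ into Theorem~\ref{thm:termination} immediately gives at most $O(|\env(\varphi)||\Vsf|)$ equivalence queries and at most $O(|\env(\varphi)|^2|\Vsf|)$ membership queries. Second, since $\env(\varphi)$ is defined as the Duquenne-Guigues basis of $\{h \text{ Horn} \mid \varphi \models h\}$, which is logically equivalent to $\varphi$ (as $\varphi$ itself is Horn) and has a minimal number of clauses among all such representations, we have $|\env(\varphi)| \leq |\varphi|$. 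Substituting yields $O(|\varphi||\Vsf|)$ and $O(|\varphi|^2|\Vsf|)$ respectively.

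To justify that these are ``the exact same bounds as the classical algorithm,'' I would additionally remark that when $\varphi$ is Horn, Algorithm~\ref{alg:learn} effectively reduces to Angluin's original algorithm. Indeed, the loop at Lines~11--13 moves an example $e \in E^-$ into $\NH$ only when $e = \bigcap E^+_e$, which means $e \in \closure(\mod(\varphi))$. Since $\closure(\mod(\varphi)) = \mod(\varphi)$ and $e$ is a negative example, this condition can never trigger; hence $\NH$ remains empty throughout the run, $Q = \emptyset$, and the algorithm is indistinguishable from the classical Horn learner.

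I do not anticipate significant obstacles: the corollary is essentially a direct specialization of Theorem~\ref{thm:termination}. The only subtlety worth double-checking is the inequality $|\env(\varphi)| \leq |\varphi|$, which depends on what ``size'' means for a Horn formula; under the natural reading (number of clauses, or total length measured by sum of clause sizes), this follows from the fact that $\env(\varphi)$ is the minimal Horn representation and $\varphi$ is one candidate Horn representation of the same set of models.
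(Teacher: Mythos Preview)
Your proposal is correct and matches the paper's own proof essentially line for line: the paper's argument is simply that $k=0$ when $\varphi$ is Horn, that $|\env(\varphi)|\leq|\varphi|$ because the envelope is defined as the DG-basis, and then Theorem~\ref{thm:termination} gives the bounds. Your additional paragraph explaining why Lines~11--13 never fire (so the algorithm coincides with the classical one) is not in the paper's proof but is a welcome elaboration.
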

\begin{proof}
If the target is Horn then $\varphi\equiv\env(\varphi)$
and $|\env(\varphi)|\leq|\varphi|$ because the envelope is defined to be DG$(\varphi)$. Then the claim follows from Theorem \ref{thm:termination} since $k=0$ when the target is Horn.
\end{proof}

\textcolor{black}{Corollary~\ref{cor:polyifpolynonHorn} states that polynomial time bounds
can also be achieved if the target is `close' to Horn in the sense of having few non-Horn examples.  }

\begin{corollary}\label{cor:polyifpolynonHorn}
The algorithm terminates in polynomial time if there are only polynomially many non-Horn examples w.r.t. $|\env(\varphi)|$ and $|\Vsf|$.
\end{corollary}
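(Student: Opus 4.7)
The plan is to derive this result as an immediate consequence of Theorem~\ref{thm:termination}. That theorem bounds the number of equivalence queries by $O(|\env(\varphi)||\Vsf|+k|\Vsf|)$ and the number of membership queries by $O(|\env(\varphi)|^2 k^2 |\Vsf|)$, where $k=|\closure(\mod(\varphi))\setminus\mod(\varphi)|$. By definition of the Horn envelope, $k$ is exactly the number of non-Horn negative examples of $\varphi$. Under the hypothesis of the corollary, $k$ is polynomially bounded in $|\env(\varphi)|$ and $|\Vsf|$, so substituting this bound yields polynomial expressions for both query counts, since sums and products of polynomials remain polynomial.

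To conclude polynomial \emph{time}, and not merely polynomial query complexity, I would verify that all non-query work performed per iteration is also polynomially bounded. Each iteration manipulates $E^+$, $E^-$, and $\NH$, whose sizes are controlled by the same bounds used in the proof of Theorem~\ref{thm:termination}: $|E^-|+|\NH|\leq|\env(\varphi)|+k$, and the total number of positive counterexamples ever inserted into $E^+$ is at most $(|\Vsf|+1)(|\env(\varphi)|+k)$. The recomputations of $H$ and $Q$ on Lines~14--15 involve at most one metaclause per element of $E^-$ and $\NH$, each of size bounded by $|\Vsf|$; the test on Line~\ref{lin:condition} amounts to intersection and satisfaction checks of polynomial cost. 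Under the polynomial bound on $k$, all these quantities are polynomial in $|\env(\varphi)|$ and $|\Vsf|$, so each iteration runs in polynomial time.

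I do not anticipate any genuine obstacle here: the substantive work---the bounds on $|E^-|$, the refinement argument of Lemmas~\ref{lem:posinbetween}--\ref{lem:2properties}, and Corollary~\ref{cor:boundonHornexamples}---has already been carried out in the proof of Theorem~\ref{thm:termination}. The present corollary is a direct specialization of that theorem to the regime where $k$ is polynomial, and the only remaining verification is that the per-iteration bookkeeping fits within the polynomial budget, which it does by inspection.
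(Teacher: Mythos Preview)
Your proposal is correct and takes essentially the same approach as the paper: the paper's proof is simply ``If $k$ is polynomially bounded by $|\env(\varphi)|$ and $|\Vsf|$ then the claim is immediate from Theorem~\ref{thm:termination}.'' Your additional paragraph verifying that the per-iteration bookkeeping is polynomial is a welcome elaboration that the paper omits, but the core argument is identical.
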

\begin{proof}
If $k$ is polynomially bounded by $|\env(\varphi)|$ and $|\Vsf|$ then the claim is immediate from Theorem \ref{thm:termination}.
\end{proof}


\begin{corollary}
\textcolor{black}{If in Algorithm \ref{alg:learn} we replace $\EQ^{\text{Horn}}_{\varphi}$ by a CNF oracle $\EQ_{\varphi}$, then this algorithm terminates and outputs a CNF representation $\Hmc\cup \Qmc$ of the target CNF $\varphi$. In general, termination is in exponential time but in polynomial time if $\varphi$ has only polynomially many non-Horn examples.}
\end{corollary}
\begin{proof}[Sketch]
\textcolor{black}{
Theorem \ref{thm:termination} remains true under the modification of Algorithm \ref{alg:learn}. This is because the only difference between $\EQ^{\text{Horn}}_{\varphi}$ and $\EQ_{\varphi}$ is the condition on which the oracle answers ``yes'' (i.e. the termination condition). While $\EQ^{\text{Horn}}_{\varphi}$ can return ``yes'' if there are still non-Horn negative counterexamples left to return, $\EQ_{\varphi}$ cannot. However, in an adversarial scenario the Horn equivalence oracle could force the more difficult stopping condition of $\EQ_{\varphi}$ to hold by first returning non-Horn negative counterexamples whenever possible. Thus the statement of this Theorem is already covered as a worst case of Horn envelope learning in Theorem \ref{thm:termination}.}
\end{proof}

\textcolor{black}{For the Theorem below, we will need another lemma, which is known in the Horn learning literature.}

\begin{lemma}[\cite{QueryLattice, CanonicalHornArias}]\label{lem:ariaslemma}
\textcolor{black}{At the end of each iteration of the main loop, for all Horn negative examples $e_i,e_j\in E^-$ with $i<j$, there is some positive example $z\in\mod(\env(\varphi))$ such that $e_i\cap e_j\subseteq z\subseteq e_j$.}
\end{lemma}
\begin{proof}
\textcolor{black}{By induction on the number of iterations of the main loop. Throughout, we use the fact that for any four subsets $a,b,c,d\subseteq\Vsf$, $a\subseteq b$ and $c\subseteq d$ implies $a\cap c\subseteq b\cap d$. Write $\bigcap E$ for the big intersection $e_1\cap \ldots \cap e_n$ where $E=\{e_1,\ldots,e_n\}$. Clearly the claim holds initially when $E^-$ is empty. Now suppose the claim holds at iteration $n$ and the $n+1$-th equivalence query returns a counterexample $x$. Let $E^+, E^-$ denote the list of positive and negative examples at the end of the $n$'th round. 
If $x$ is a positive counterexample, $E^-$ can only be changed in this round by removing some elements, so the claim  is immediate from the inductive hypothesis. Else $x$ is a negative counterexample, in which case it is either added to $E^-$ as the last element or used to refine some $e_i\in E^-$. Suppose $x$ is added at the end of the list. This means that for all $e_i\in E^-$ we have either (i) $e_i\subseteq x$, (ii) $\MQ_{\varphi}(e_i\cap x)=\textup{yes}$ or (iii) $e_i\cap x\not\models\Qmc$. If (i) then $\bigcap E^+_{e_i}\in\mod(\env(\varphi))$ with $e_i\subseteq \bigcap E^+_{e_i}\subseteq x$, and if (ii) or (iii), $e_i\cap x\in\mod(\env(\varphi))$.}

\textcolor{black}{If instead $e_i\in E^-$ gets replaced by $e_i\cap x$, we have to consider two cases. First, if $i<j$ then by inductive hypothesis there is some $z\in\mod(\env(\varphi))$ with $e_i\cap e_j\subseteq z\subseteq e_j$. But as $(e_i\cap x)\cap e_j\subseteq e_i\cap e_j$ the same positive example $z$ suffices. Lastly, if $j<i$ then by inductive hypothesis there is some $z\in\mod(\env(\varphi))$ with $e_j\cap e_i\subseteq z\subseteq e_i$. Since $x$ is not refined with $e_j$, either (i) $e_j\subseteq x$, (ii) $\MQ_{\varphi}(e_j\cap x)=\textup{yes}$ or (iii) $e_j\cap x\not\models\Qmc$. In case (i) $\bigcap E^+_{e_j}\subseteq x$. But as $e_j\cap e_i\subseteq e_j\subseteq \bigcap E^+_{e_j}$, it follows that
\[e_j\cap (e_i\cap x)\subseteq e_j\cap e_i\subseteq\bigcap E^+_{e_j}\cap z\subseteq e_i\cap x\]
where $\bigcap E^+_{e_i\cap x}\cap z\in\mod(\env(\varphi))$. In case (ii) and (iii) we have $e_j\cap x\in\mod(\env(\varphi))$ and hence also $z\cap(e_j\cap x)\in\mod(\env(\varphi))$. The claim then follows as $e_j\cap e_i\cap x\subseteq z\cap(e_j\cap x)\subseteq e_i\cap x$.}
\end{proof}


\begin{theorem}\label{thm:Hstarisenv}
If Algorithm~\ref{alg:learn} halts and outputs $\Hmc\cup \Qmc$ then $\Hmc\equiv\env(\varphi)$ \textcolor{black}{and $\Hmc$ is the DG basis of $\env(\varphi)$.}
\end{theorem}
\begin{proof}
By definition of the Horn equivalence oracle, if the last equivalence query answers ``yes'' then $\env(\Hmc\cup \Qmc)\equiv\env(\varphi)$. We show that $\Hmc\equiv\env(\Hmc\cup \Qmc)$ and that $\Hmc$ is saturated and hence the DG basis of $\env(\varphi)$. \textcolor{black}{
We start to show the former claim. By definition of   envelope, it suffices to show that $\Hmc\equiv\{h\;\text{Horn rule}\;|\;\Hmc\cup \Qmc\models h\}$.} Clearly $\{h\;\text{Horn rule}\;|\;\Hmc\cup \Qmc\models h\}\models\Hmc$. For the other direction, we have to show that every Horn rule entailed by $\Hmc\cup \Qmc$ is already entailed by $\Hmc$. Suppose that $\Hmc\cup \Qmc\models h$ for some Horn rule $h$. This is equivalent to saying that $\mod(\neg h)\subseteq\mod(\neg \Hmc)\cup\mod(\neg \Qmc)$ (recall sets of rules are interpreted as the conjunction of them). 
But $\mod(\neg h)$ consists only of Horn negative examples for $\varphi$, since   $\env(\Hmc\cup \Qmc)\models h$, $\env(\Hmc\cup \Qmc)\equiv\env(\varphi)$ and hence $\env(\varphi)\models h$ as well.
By construction, $\mod(\neg \Qmc)$ consists only of non-Horn negative examples for $\varphi$. Since the set of Horn and the set of non-Horn negative examples are disjoint, it follows that $\mod(\neg h)$ and $\mod(\neg \Qmc)$ are also disjoint. But then $\mod(\neg h)\subseteq \mod(\neg \Hmc)$, and so, $\Hmc\models h$.

We now show that $\Hmc=\{\horn_{E^+}(e_i)\;|\;e_i\in E^-\}$ is saturated. We first show right-saturatedness, so let $e_i\in E^-$. If $\con(\horn_{E^+}(e_i))=\bot$ we are done. So suppose that 
$\con(\horn_{E^+}(e_i))=\bigcap E^+_{e_i}\subseteq\Vsf$. We want to show that $\env(\varphi)[e_i]=\bigcap E^{+}_{e_i}$. For the left to right inclusion, if $p\in\env(\varphi)[e_i]$, i.e. $\env(\varphi)\models\bigwedge e_i\to p$, then for all $y\in E^{+}_{e_i}$ we have $p\in y$, therefore $p\in\bigcap E^{+}_{e_i}$. \textcolor{black}{For the converse inclusion, suppose that $p\in\bigcap E^+_{e_i}$. It follows that $\horn_{E^+}(e_i)\models\bigwedge e_i\to p$. But since $\Hmc\equiv\env(\varphi)$, in particular $\env(\varphi)\models\horn_{E^+}(e_i)$ and hence $\env(\varphi)\models\bigwedge e_i\to p$, which just means that $p\in\env(\varphi)[e_i]$.}

\textcolor{black}{
To see that $\Hmc$ is left-saturated, take $e_i,e_j\in E^-$ with $i\ne j$. First, we show that $e_i\not\models\horn_{E^+}(e_i)$. If $\con(\horn_{E^+}(e_i))=\bot$ this is clearly so, and else $\con(\horn_{E^+}(e_i))=\bigcap E^+_{e_i}\subseteq\Vsf$. Note that $e_i\subseteq\bigcap E^+_{e_i}$ always holds, but it cannot be that $\bigcap E^+_{e_i}=e_i$ otherwise $e_i$ would have been removed from $E^-$ and put into $\NH$ in the last iteration of the main loop. Hence $e_i\subset\bigcap E^+_{e_i}$ which means that $e_i\not\models\horn_{E^+}(e_i)$. This also entails that there are no non-Horn examples left in $E^-$. Suppose for contradiction that there was some non-Horn example $e_k\in E^-$ at the point of termination. Then it has not been removed from $E^-$ and moved to $\NH$, so $e_k\subset\bigcap E^+_{e_k}$ but then $e_k\not\models\horn_{E^+}(e_k)$ and hence $e_k\not\models\Hmc$. However, as $e_k$ is non-Horn, $e_k\models\env(\varphi)$ and hence $e_k$ contradicts the equivalence of $\Hmc$ and $\env(\varphi)$.}
\textcolor{black}{Next, clearly $e_j\models\horn_{E^+}(e_i)$ if $e_i\not\subseteq e_j$, so suppose that $e_i\subseteq e_j$. Since we have just shown that both $e_i$ and $e_j$ must be Horn negative examples,  by Lemma~\ref{lem:ariaslemma}, there is some positive example $z\in\mod(\env(\varphi))$ such that $e_i\cap e_j\subseteq z\subseteq e_j$. But since $e_i\subseteq e_j$ we have $e_i\cap e_j=e_i$, so in fact $e_i\subseteq z\subseteq e_j$. By closure under intersection, $\bigcap E^+_{e_i}\cap z\models\env(\varphi)$. As $e_i\subseteq\bigcap E^+_{e_i}\cap z$, it follows that $\env(\varphi)[e_i]\subseteq\bigcap E^+_{e_i}\cap z\subseteq z\subseteq e_j$. But as $\Hmc\equiv\env(\varphi)$, $\Hmc[e_i]=\env(\varphi)[e_i]$, and by right-saturatedness $\con(\horn_{E^+}(e_i))=\bigcap E^+_{e_i}=\Hmc[e_i]$. Hence $\bigcap E^+_{e_i}\subseteq e_j$ so $e_j\models\horn_{E^+}(e_j)$.}
\end{proof}

\subsection{Hardness of Learning the Horn Envelope}
In this section, we establish the difficulty of learning Horn envelopes by reducing it to the problem learning arbitrary CNFs, which is known to be a hard problem~\cite{NegResforEQAngluin,ANGLUINKharitonov} (Theorem~\ref{thm:hardness}). This hardness result complements Corollary \ref{cor:polyifpolynonHorn}, by showing that it is in some sense a best possible upper bound. The reduction is essentially the same as the one by Frazier~\cite{10.5555/221449}, showing that learning arbitrary CNFs polynomially reduces to 2-quasi Horn.\footnote{In fact something stronger is shown there, namely that learning CNF polynomially reduces to a \emph{special subclass} of 2-quasi-Horn that consists only of rules with either empty antecedent or empty consequent, \textcolor{black}{thus a conjunction of monotone and antitone clauses.}} This means that we can employ a 2-quasi-Horn learning algorithm to learn a suitable encoding of a CNF as a 2-quasi-Horn formula over an extended set of variables.

The trick is to replace positive literals $p$ with the negated literals $\neg p^{\neg}$, where $p^{\neg}$ is a fresh variable that will be forced to be interpreted as $\neg p$ by some extra setup-formulas. We show that learning CNF polynomially reduces to learning Horn envelopes. First, we will define the encoding and establish some properties of it. Given a set of variables $\Vsf=\{v_1,\ldots,v_n\}$, make a disjoint copy of all these variables $\Vsf^{\neg}:=\{v_1^{\neg},\ldots,v_n^{\neg}\}$ \textcolor{black}{and set
$\Vsf^+:=\Vsf\uplus\Vsf^{\neg}$ (where $\uplus$ denotes \emph{disjoint union}). Furthermore, define the function $(\cdot)^{\circ}$ of $\Vsf^+$ as follows:}
\[p^{\circ}:=
 \begin{cases}
        q^{\neg} &\quad\text{if}\;p=q\in V\\
        q &\quad\text{if}\;p=q^{\neg}\in V^{\neg}\\
     \end{cases}
\]
We use all rules of the form $v\wedge v^{\neg}\to\bot$ and $v\vee v^{\neg}$ for each $v\in\Vsf$ to ensure that $v^{\neg}$ is interpreted as $\neg v$. Let $\chi_{\text{setup}}$ be the conjunction of all such rules. For every rule $c=\bigwedge \ant(c)\to\bigvee\con(c)$ over $\Vsf$ define the negative Horn rule $c^{\neg}$ over $\Vsf^+$ by setting 
\[c^{\neg}:=\bigwedge(\ant(c)\cup\{p^{\neg}\;|\;p\in\con(c)\})\to\bot\]
For a conjunction of rules $\varphi=c_1\wedge\ldots\wedge c_n$, let $\varphi^{\neg}:=c^{\neg}_1\wedge\ldots\wedge c^{\neg}_n$. Clearly, this operation turns every rule into a (negative) Horn rule over an extended signature. Now set $\enc(\varphi):=\varphi^{\neg}\wedge\chi_{\text{setup}}$. Note that $|\varphi^{\neg}|$ contains as many rules as $|\varphi|$ and $|\chi_{\text{setup}}|$ is in $O(|\Vsf|)$, so $|\enc(\varphi)|$ is polynomial in $|\varphi|$. Furthermore, an example $x\subseteq\Vsf$ is mapped to the example $x^{\neg}=x\cup\{p^{\neg}\in\Vsf^{\neg}\;|\;p\not\in x\}$ for which it is easily checked that:
\begin{equation}
x\models\varphi\qquad\text{iff}\qquad x^{\neg}\models\varphi^{\neg}
\end{equation}
\textcolor{black}{Also, note that $\mod(\chi_{\text{setup}})=\{x^{\neg}\subseteq\Vsf^+\;|\;x\subseteq\Vsf\}$. Moreover, there is an inverse $\dec(\cdot)$ to $\enc(\cdot)$ which takes a CNF over the extended set of variables $\Vsf\cup\Vsf^{\neg}$ back to a CNF over $\Vsf$. For every rule $c=\bigwedge\ant(c)\to\bigvee\con(h)$ where $\ant(c),\con(c)\subseteq\Vsf^+$ define $\dec(c)$ to be
\[\bigwedge(\ant(c)\cap\Vsf)\cup\{q\;|\;q^{\neg}\in\con(c)\}\to\bigvee(\con(c)\cap\Vsf)\cup\{p\;|\;p^{\neg}\in\ant(c)\}\]
and for a conjunction of rules $\psi=c_1\wedge\ldots\wedge c_n$ set $\dec(\psi)=\dec(c_1)\wedge\ldots\wedge\dec(c_n)$.}
\textcolor{black}{Under these definitions, 
for every CNF $\varphi$ over $\Vsf$, we have $\dec(\enc(\varphi))\equiv\varphi$ because $\dec(\chi_{\text{setup}})\equiv\top$ and $\dec(\varphi^{\neg})\equiv\varphi$. Also, the following fact holds.}
\begin{lemma}\label{lem:newlemma}
\textcolor{black}{For every model $y\subseteq\Vsf^+$ such that $y=x^{\neg}$ for some $x\subseteq\Vsf$ and every CNF $\psi$ over $\Vsf^+$:}
\[\textcolor{black}{y\models\psi\qquad\textrm{iff}\qquad y\models\dec(\psi)^{\neg}}\]
\textcolor{black}{i.e. $\mod(\psi)\cap\mod(\chi_{\text{setup}})=\mod(\dec(\psi)^{\neg})\cap\mod(\chi_{\text{setup}})$.}
\end{lemma}
\begin{proof}
\textcolor{black}{Let $y=x^{\neg}\subseteq\Vsf^+$ and $\psi=c_1\wedge\ldots\wedge c_n$ be a CNF over $\Vsf^+$. Note that every rule of $\dec(\psi)^{\neg}$ is of the form $\dec(c_i)^{\neg}$ for some rule $c_i$ of $\psi$. It is not difficult to see that the every clause of the form $\dec(c)^{\neg}$ can be written as follows.}
\begin{align*}
    \textcolor{black}{\dec(c)^{\neg}=\bigwedge(\ant(c)\cup\{q^{\circ}\;|\;q\in\con(c)\})\to\bot}
\end{align*}

\textcolor{black}{Clearly $y\not\models\psi$ iff $y\not\models c_i$ for some $1\leq i\leq n$. In that case, for this $i$ we have $\ant(c_i)\subseteq y$ and $\con(c_i)\cap y=\emptyset$. But as $y=x^{\neg}$, this holds iff $\ant(c_i)\subseteq y$ and $\{q^{\circ}\;|\;q\in\con(c_i)\}\subseteq y$. But by the above definition of $\dec(c)^{\neg}$, we have $\ant(\dec(c_i)^{\neg})=\ant(c_i)\cup\{q^{\circ}\;|\;q\in\con(c)\}$ and $\con(\dec(c_i)^{\neg}))=\bot$, hence this holds iff $y\not\models\dec(c_i)^{\neg}$ for this $i$, which proves the claim.}
\end{proof}

\textcolor{black}{We want to show that learning the Horn envelope $\env(\enc(\varphi))$ suffices to learn $\varphi$, because 
$\varphi^{\neg}$ consists only of Horn rules over the extended signature $\Vsf^+$ and somehow captures all information about the original CNF $\varphi$. We observed above that $\dec(\enc(\varphi))\equiv\varphi$ but here we show the stronger claim that in fact $\dec(\env(\enc(\varphi)))\equiv\varphi$.}

\begin{lemma}\label{lem:envofenc}
The Horn envelope $\env(\enc(\varphi))$ is logically equivalent to $\Phi:=\varphi^{\neg}\cup\{(p\wedge p^{\neg})\to\bot\;|\;p\in\Vsf\}\cup\{\bigwedge\ant(c^{\neg})\setminus\{p\}\to p^{\circ}\;|\;c^{\neg}\in\varphi^{\neg},p\in\ant(c^{\neg})\}$\textcolor{black}{, and hence $\dec(\env(\enc(\varphi)))\equiv\varphi$ for any CNF $\varphi$ over $\Vsf$.}
\end{lemma}
\begin{proof}
We first show that $\env(\enc(\varphi))$ entails $\Phi$. We know that $\env(\enc(\varphi))\equiv\{h\;\text{Horn rule}\;|\;\enc(\varphi)\models h\}$.
So as $\varphi^{\neg}\cup\{(p\wedge p^{\neg})\to\bot\;|\;p\in\Vsf\}$ is a subset of $\enc(\varphi)$, these rules are entailed by $\enc(\varphi)$. Now take any rule  $\Phi$ of the form $\bigwedge(\ant(c^{\neg})\setminus\{p\})\to p^{\circ}$, where $c^{\neg}\in\varphi^{\neg}$ and $p\in\ant(c^{\neg})$. 
Recall that every Horn rule in $\varphi^{\neg}$ has an empty consequent because all positive literals have been substituted out. 
Hence by the valid rule of resolution:
\[(\bigwedge (P\cup\{p\})\to\bigvee Q)\wedge(\bigwedge P'\to\bigvee (Q'\cup\{p\}))\models\bigwedge P\cup P'\to\bigvee Q\cup Q'\]
we get:
\[c^{\neg}\wedge( p\vee p^{\neg})=(\bigwedge\ant(c^{\neg})\to\bot)\wedge (p\vee p^{\neg})\models\bigwedge(\ant(c^{\neg})\setminus\{p\})\to p^{\circ}.\]

For the other direction, we want to show that $\Phi\models\env(\enc(\varphi))$, i.e. $\mod(\Phi)\subseteq\mod(\env(\enc(\varphi)))=\closure(\mod(\enc(\varphi)))$. So let $x\models\Phi$. We need to show that $x$ is in the closure of $\mod(\enc(\varphi))$. By Remark \ref{rem:closure}, this is equivalent to checking whether $x=\bigcap\{e\in\mod(\enc(\varphi))\;|\;x\subseteq e\}$.
We know that for no $p\in\Vsf$ both $p$ and $p^{\neg}$ are in $x$, and we know that $x\models\varphi^{\neg}$. 

For each $p\in\Vsf$ such that both $p,p^{\neg}\not\in x$, we know that $x\cup\{p\}\models\bigwedge_{p\in\Vsf}(p\wedge p^{\neg})\to\bot$ still. If $x\cup\{p\}\not\models\varphi^{\neg}$ then, since $\varphi^{\neg}$ consists only of Horn rules of the form $\bigwedge\ant(h)\to\bot$, it must be that $\varphi^{\neg}\models\bigwedge(x\cup\{p\})\to\bot$. So since $\varphi^{\neg}$ consists only of negative Horn rules, there must be some rule $\bigwedge y\to\bot\in\varphi^{\neg}$ with $y\subseteq x\cup\{p\}$.

\textcolor{black}{It cannot be that $y\subseteq x$ otherwise $x\not\models\bigwedge y\to\bot$ while we assumed that $x\models\Phi$ and $\varphi^{\neg}$ is contained in $\Phi$. But then it must be that $p\in y$ and hence $\bigwedge y\setminus\{p\}\to p^{\circ}\in\Phi$, which contradicts our assumption that $x\models\Phi$ and $y\subseteq x\cup\{p\}$. Thus it must be that $x\cup\{p\}\models\varphi^{\neg}$ and a similar argument shows that $x\cup\{p^{\neg}\}\models\varphi^{\neg}$. But now $x=(x\cup\{p\})\cap(x\cup\{p^{\circ}\})$ which is in $\closure(\mod(\enc(\varphi)))=\mod(\env(\enc(\varphi)))$. Finally,  we have that  
$\dec(c^{\neg})\equiv c$, $\dec(p\wedge p^{\neg}\to\bot)\equiv\top$ and 
$\dec(\bigwedge(\ant(c^{\neg})\setminus\{p\})\to p^{\circ})\equiv\dec(c^{\neg})\equiv c$,
so $\dec(\env(\enc(\varphi)))\equiv\varphi$.}
\end{proof}

\begin{theorem}\label{thm:hardness}
Learning Horn envelopes (in polynomial time) with membership and equivalence queries is at least as hard as learning CNF (in polynomial time) with membership and equivalence queries.
\end{theorem}
\begin{proof}
\textcolor{black}{Take any CNF $\varphi$ over $\Vsf$ and consider its encoding $\enc(\varphi)$, which is a CNF over $\Vsf\cup\Vsf^{\neg}$ such that $\dec(\env(\enc(\varphi)))\equiv\varphi$ by Lemma \ref{lem:envofenc}.} Suppose there is an algorithm $\mathcal{A}$ that learns the Horn envelope in polynomial time. We will use $\mathcal{A}$ to learn a CNF representation of $\varphi$, going back and forth between the two settings with the $\enc(\cdot)$ and $\dec(\cdot)$ mappings. \textcolor{black}{Because we have that $\dec(\env(\enc(\varphi)))\equiv\varphi$, it suffices to show that we can correctly answer the oracle queries posed by $\mathcal{A}$, using our knowledge of the encoding and our two CNF oracles $\MQ_{\varphi}$ and $\EQ_{\varphi}$. For then $\mathcal{A}$ will terminate in polynomial time by assumption and output some representation $\psi$ of $\env(\enc(\varphi))$, whose decoding $\dec(\psi)$ would then be a representation of the original CNF $\varphi$.}

When $\mathcal{A}$ asks a membership query $\MQ_{\enc(\varphi)}(y)$ for some $y\subseteq\Vsf^+$, if $y\ne x^{\neg}$ for some $x\subseteq\Vsf$ then answer the membership query with ``no''. Else, for the unique $x\subseteq\Vsf$ such that $x^{\neg}=y$ ask the membership query $\MQ_{\varphi}(x)$ and return the answer to $\mathcal{A}$. When $\mathcal{A}$ asks an equivalence query $\EQ^{\textup{Horn}}_{\enc(\varphi)}(\psi)$, ask the equivalence query $\EQ_{\varphi}(\dec(\psi))$. If the CNF oracle answers ``yes'' to the latter query then $\dec(\psi)$ is a CNF representation of the original target CNF $\varphi$ and we are done. Otherwise the oracle returns a counterexample $x\in\mod(\varphi)\oplus\mod(\dec(\psi))$. It follows that $x^{\neg}\in\mod(\varphi^{\neg})\oplus\mod(\dec(\psi)^{\neg})$, \textcolor{black}{whence by Lemma \ref{lem:newlemma} we know that $x^{\neg}\in \mod(\varphi^{\neg})\oplus\mod(\psi)$. Also, since $\enc(\varphi)=\varphi^{\neg}\wedge\chi_{\text{setup}}$ and $\mod(\chi_{\text{setup}})=\{x^{\neg}\subseteq\Vsf^+\:|\;x\subseteq\Vsf\}$ we get that $x^{\neg}\in\mod(\enc(\varphi))\oplus\mod(\psi)$.} Hence ``no'' with the (necessarily Horn) counterexample $x^{\neg}$ is a valid answer to the equivalence query $\EQ^{\textup{Horn}}_{\enc(\varphi)}(\psi)$. It follows that learning Horn envelopes (in polynomial time) is as hard as learning CNFs (in polynomial time).
\end{proof}

\textcolor{black}{Consider the following variation of the above reduction: map any CNF $\varphi$ over $\Vsf$ to $\varphi^{\neg}$ (which is a Horn theory over $\Vsf^+$) and map every $x\subseteq\Vsf$ to $x^{\neg}$. This satisfies the condition of being a reduction among exact learning problems with equivalence queries \cite{DBLP:conf/aaai/KonevOW16, DBLP:journals/toct/HermoO20}, because the following holds.}
\[\textcolor{black}{x\models\varphi\qquad\textrm{iff}\qquad x^{\neg}\models\varphi^{\neg}}\]
\textcolor{black}{This shows that learning Horn formulas with only equivalence queries (in polynomial time) is as hard as CNF with only equivalence queries (in polynomial time). In fact it has been long known in the literature that Horn formulas are not learnable in polynomial time \cite{NegResforEQAngluin}. However it is \emph{not} a reduction under both membership and equivalence queries, because there is no way to simulate the answer to membership queries $\MQ_{\varphi^{\neg}}(y)$ for models $y\subseteq\Vsf^+$ for which there is no $x\subseteq\Vsf$ such that $x^{\neg}=y$. This makes sense because Horn formulas are known to be exact learnable with both queries in polynomial time \cite{Horn}.}

\section{Learning from Neural Networks}\label{sec:obstables}

In this section, we discuss in more details how we address the 
obstacles  mentioned in the Introduction to apply exact learning algorithms for extracting knowledge from trained neural networks. 
We start discussing the \textbf{second obstacle}.
Intuitively, we pose our queries to the neural network, thus \emph{viewing the neural network as the oracle}. To do this, one has to define a Boolean function from a trained neural network.
%
In this work, we create the lookup table presented in the appendix (Table~\ref{tab:attributes}) to make the conversion between Boolean values and expressions in natural language given to the language model. 
Discrete-valued attributes such as ``occupation'' with 11 values (including ``unknown occupation'') are encoded by 10 fresh variables intuitively representing propositions such as ``the occupation is mathematician''\footnote{For continuously valued attributes, one way to binarize is by chopping the continuous interval up into discretely many intervals and creating a fresh variable for each such interval.}.
We illustrate the conversion in Example~\ref{ex:conversion}.

\begin{example}\label{ex:conversion}
For the conversion, we use the lookup table and a template sentence. Suppose the template sentence is ``\textless mask\textgreater  was born [year] in [continent] and is a [occupation].''. Then, given a Boolean example
\[[0, 0, 0, 0, 1, 0, 1, 0, 0, 0, 0, 0, 0, 0, 0, 0, 1, 0, 0, 0, 0, 0, 0, 0, 1, 0]\]
the first $5$ positions represent the time period, the following $9$ positions are for the continent and then there are $10$ positions for the occupation. The last $2$ positions represent the gender, which is the true label.
The given example then translates to the sentence ``\textless mask\textgreater  was born after 1970 in Africa and is a dancer.'' with the true label ``female'' meaning the masked token should be filled with ``She''.
\end{example}
Once such correspondence with a Boolean function has been defined, one can use the neural network to answer oracle queries. Clearly, membership queries can be easily simulated by running the neural network on an example and check the classification. However, the \textbf{first obstacle} mentioned in the Introduction 
is that an equivalence oracle is hard to simulate in practice because it requires checking whether two formulas are equivalent and return a counterexample if this is not the case. 
In absence of an explicit representation for the Boolean function defined by the neural network, the only (foreseeable) way of checking equivalence w.r.t. the Boolean function defined by the neural network is to check all examples for agreement, which is an exponential task. 

Hence, we use the standard technique of simulating equivalence queries by random sampling \cite{angluinqueries}. That is, every time Algorithm \ref{alg:learn} asks an equivalence query $\EQ^{\textup{Horn}}_{\varphi}(\psi)$ we randomly generate a batch of examples and check whether the hypothesis $\psi$ classifies an example from this batch differently than $\env(\varphi)$, given the labels of $\varphi$ for this batch. It may happen that an example $x$ is labelled negatively yet it satisfies the envelope (if $x$ is a non-Horn negative example for $\varphi$). We only know that the classification of $x$ by $\varphi$ is different from $\env(\varphi)$ if we have observed a number of positive examples whose intersection is $x$; that is, if we have the data to prove that $x$ is non-Horn. Thus, the interpretation of the classification we receive from the oracle dynamically changes in response to the positive examples we receive. In other words, we learn what the real target $\env(\varphi)$ (that is, where it differs from the underlying formula $\varphi$) is whilst approximating it.

Regarding the \textbf{third obstacle}, it is unlikely that a Boolean function defined by a neural network (suitably binarized) defines exactly a Horn formula. This is because neural networks tend not to be \emph{rule-like} while a Horn formula is exactly a set of rules. This motivates studying the problem of learning Horn envelopes of arbitrary Boolean functions. However, we see that we can neither  use  the deterministic algorithm by Dechter and Pearl~\cite{DECHTER1992237} nor the probabilistic one by Kearns, Selman, and Kautz~\cite{KAUTZ1995129} because both assume access to a complete description of the positive examples (or its so-called ``characteristic models'' \cite{KAUTZ1995129}, whose intersection-closure is the set of all positive examples) which is an unrealistic assumption. 



While  Corollary~\ref{cor:polyifpolynonHorn} may seem like a weak statement, it is of practical interest. This is because real-world data tends to be \emph{sparse}. That is, $|\Vsf|$ tends to be big while $|\varphi|$ tends to be small~\cite{Borchmann_2020}. Similar remarks are made in footnote 4 of \cite{KAUTZ1995129}. We quote the following passage from \cite{DECHTER1992237} (adapted to fit our notation, where $M$ is a set of models):
\begin{quote}
``If $\closure(M)$ is substantially larger than $M$, we know that any Horn approximation is bound to be very poor. It is only when $|\closure(M)\setminus M|$ is a fraction of $|M|$ that Horn formulas can offer a reasonable approximation to $M$, and it is precisely in those cases that we can find a tightest Horn approximation in reasonable time. This suggest a strategy of focusing the development of Horn approximations in only those cases  that can benefit from such approximations.''
\end{quote}

In the next section we present our experimental results using a modified version of Angluin's algorithm for learning Horn formulas where queries are converted into natural language and posed to language models.

\section{Experiments}\label{sec:experiments}
We describe an experiment performed on different language models (LM) that we used as an oracle. 
In more details, we modify Angluin's Horn algorithm to make it applicable
to extract Horn expressions from the BERT-based language models~\cite{devlin2019}, including   RoBERTa-base and RoBERTa-large
~\cite{liuetal2019}. All models are 
accessed via the API of huggingface.\footnote{https://huggingface.co/}
%
 Our goal with this experiment is to showcase the applicability of the Horn algorithm for probing LMs and find out if occupation is generally more often linked to gender\footnote{We consider the (binary) male and female genders as the used data is based on this format.}  than  to other attributes.
For this comparison we use nationality and birth year as they are, next to gender, a defining attribute of every person. They represent culture and age in a very simple form and are therefore likely to also be linked to a persons' occupation, as certain age groups or cultures are more likely to have one occupation over another.
As a sanity check, we also perform a simple probing with the same language models and setup.

We extract a dataset from wikidata\footnote{https://www.wikidata.org/} that consists of every entity with a given occupation and their birth year, nationality, and gender. The used occupations are shown to be the 60 most gender biased occupations for the BERT-base model~\cite{DBLP:journals/corr/abs-2110-05367}.
The nationality is represented in the format of continents, as this reduces the amount of possible values drastically. For the same reason, the birth years are summarized into 5 time periods instead of distinct years. The exact time periods are determined from the dataset. The time period boundaries are then evenly spaced over all the birth years of all the entities in the dataset. This gives a more fine-grained difference in the 1900s, whereas everything before 1875 is summarized into one. The exact values can be seen in the first half of the lookup table (table \ref{tab:attributes}).
Each entity from the dataset then makes one example for probing by filling its attributes into the template sentence ``\textless mask\textgreater  was born [year] in [continent] and is a [occupation].''
The probing is done by predicting the masked pronoun in each sentence  with the given language model. 

\begin{table}[ht]
    \centering
    \begin{tabular}{|c|c|c|}
        \hline
         attribute & extracted attribute & transformed attribute \\
         \hline
         name & Ernst Zierke & Ernst Zierke \\
         gender & male & male \\
         birth year & 1905 & between 1892 and 1934 \\
         nationality & Germany & Europe \\
         \hline
    \end{tabular}
    \caption[Example data point]{One data point   for the occupation ``nurse'', showing both the extracted attributes and their transformed version after dimensionality reduction.}
    \label{tab:exampledata}
\end{table}

The difference in the resulting probabilities for ``He'' and ``She'' is then used to calculate the gender bias on sentence $i$ \cite{DBLP:journals/corr/abs-2110-05367}:
\[ \text{Pronoun Prediction Bias Score} = \textcolor{black}{ppbs}_i = p(He) - p(She).\]
With $N_{occ}$ being the number of examples for occupation $occ$, the Pronoun Prediction Bias Score for occupation $occ$ is then the average over all examples with this occupation
\[ \textcolor{black}{ppbs}_{occ} = \frac{1}{N_{occ}} \sum_{i=1}^{N_{occ}} \textcolor{black}{ppbs}_{i}\]

\begin{figure*}
        \centering
        \includegraphics[width=\textwidth]{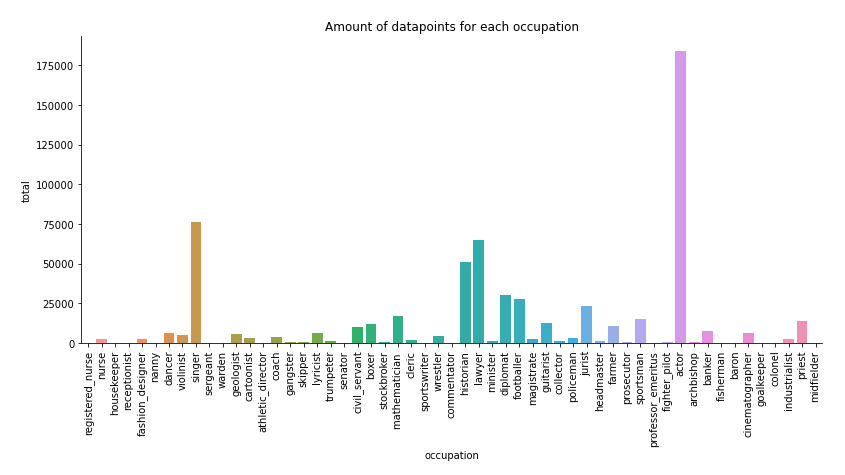}
        \caption{Amount of data points collected for each occupation from wikidata.} 
        \label{fig:dataset}
\end{figure*}
\begin{figure*}
        \centering
        \includegraphics[width=\textwidth]{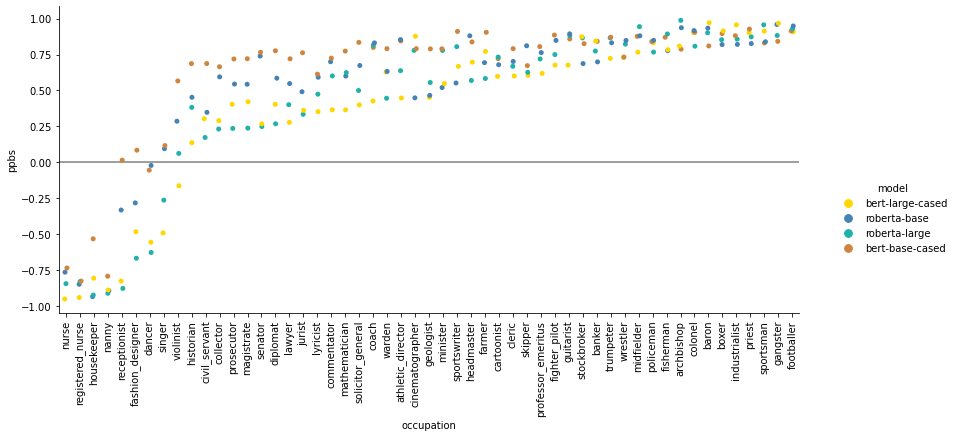}
        \caption{Pronoun Prediction Bias Score of 60 occupations based on the wikidata dataset for BERT and RoBERTa models.} 
        \label{fig:ppbs}
    \end{figure*}

Based on the results of the probing (Figure \ref{fig:ppbs}) and the frequency of the occupations in the data (Figure \ref{fig:dataset}), we chose 10 clearly biased occupations for the Horn algorithm. 
In our experiment, we developed a function that creates a sentence out of given attributes that are encoded in the variables of each interpretation. In the context of the task, an interpretation corresponds to an entity with certain attributes.

Each attribute is one-hot encoded into a vector with at most one $1$ and all of the attribute vectors together are one interpretation (=entity). In particular, the $4$ attributes are: period of time ($5$ features), nationality (as a continent, $9$ features), occupation ($10$ features), and gender ($2$ features). The attributes are handled in the same way as in the probing experiment. With this, each interpretation can be translated into natural language attributes using a lookup table (Table \ref{tab:attributes}), which can then be filled into the template sentence.

\textcolor{black}{We assume
that exactly one of the genders dominates the probability value. As an example, we   look at the data point given in Table \ref{tab:exampledata} and the probing results of this data point in Table~\ref{tab:exampleresults}. The precise attributes given in the first column are transformed according to the dimensionality reduction method mentioned above, as shown in the second column. The attributes are then filled into the template sentence ``\texttt{\textless mask\textgreater  was born [birth year] in [nationality] and is a/an [occupation].}''.} 
\textcolor{black}{In our example, the  input for the RoBERTa models is  ``\texttt{\textless mask\textgreater \ was born between 1892 and 1934 in Europe and is a nurse.}''\footnote{The mask token changes according to
the model that is used. For the RoBERTa models, the mask token is \texttt{\textless mask\textgreater} whereas for the BERT models, it is \texttt{[MASK]}. We have accordingly changed the sentences when probing the models. }.}

\begin{table}[ht]
    \centering
    \begin{tabular}{|p{5cm}|c|c|c|c|}
        \hline
        input sentence & label & $p(He)$ & $p(She)$ & $p(They)$ \\
        \hline
        \textless mask\textgreater \  was born between 1892 and 1934 in Europe and is a nurse. & ``He'' & $0.070167$ & $0.786434$ & $0.0$ \\
         \hline
    \end{tabular}
    \caption[Probing results for example data point]{Template-based probing results for the  data point given in Table \ref{tab:exampledata}.}
    \label{tab:exampleresults}
\end{table}

\textcolor{black}{Every data point in the dataset is handled this way and used to query the language models to fill the masked token. }
\textcolor{black}{The resulting probability distribution (of the top 5 results) and prediction of each language model is then saved. Table \ref{tab:exampleresults} shows the results for the given example for illustration purposes. }

For a membership query, the language model predicts the gender of the given entity by predicting the masked pronoun in the sentence. We compare this prediction with the given gender and return whether they match or not as the result of the query.
We generate the samples for an equivalence query as random feature vectors, given that each attribute can have at most one $1$ in it. 
The number of equivalence queries simulated by the Horn algorithm was limited to $50$, $100$, and $200$ for different experiments. 
For each language model we conducted $10$ iterations of each experiment.

The results from the extractions of each language model expose biases in all of them (Tables \ref{tab:BERT100}, \ref{tab:roBERTa100}, \ref{tab:BERT200} and \ref{tab:roBERTa200}). We consider the rules that were extracted in at least $7$ iterations as the most relevant and reliable ones. With 100 equivalence queries, the relevant rules for each language model link gender and occupation without taking other attributes into account (with one exception). 
The other attributes almost only appear in less relevant rules that have been extracted in $3$ or less iterations. The only exception to that is the rule \text{$ \sf{singer} \land \sf{male}  \rightarrow \sf{before\_1875} $} extracted by RoBERTa-large in 100 equivalence queries, which was extracted in 7 out of 10 iterations.
The same rules are appearing in 10 out of 10 iterations with 200 equivalence queries for all models. This confirms that those rules are the most relevant ones. It also shows that the number of equivalence queries that is used as a maximum is important for the kind of rules that are extracted and how reliable they are. 

\begin{table}
    \centering
    \begin{tabular}{|c|}
        \hline
        $\text{priest} \land \text{female}  \rightarrow \bot$ \\
        $\text{nurse} \land \text{male}  \rightarrow \bot$ \\
        $\text{mathematician} \land \text{female}  \rightarrow \bot$ \\
        $\text{footballer} \land \text{female}  \rightarrow \bot$ \\
        $\text{banker} \land \text{female}  \rightarrow \bot$ \\
       \hline
    \end{tabular}
    \caption{Intersection of rules from all language models (10/10 with 200 EQs).}
    \label{tab:intersection}
\end{table}

Recall that a rule that has $\bot$ in the consequent means that the antecedent does not happen. In addition, we consider gender to be exclusively binary\footnote{This experiment is done under the assumption of exclusive binary gender. We acknowledge that this is not the reality.} and therefore it also holds that $\lnot \sf female \leftrightarrow male$. In other words, we extracted rules revealing certain stereotypes e.g. stating that ``women are not football players'' and ``nurses are women''. All extracted stereotypes of this kind match with the results from the probing experiment.
It is also important to note that out of all rules extracted, the base models (RoBERTa-base and BERT-base) only relate the male gender with ``nurse'' without relating the other female perceived occupations ``fashion-designer'', ``dancer'', and ``singer'' as well. On the other side, the female gender is related to all male perceived occupations, even those that are more lightly biased. This also shows that bias towards females is more present than bias towards males. The latter is only extracted in the strongest case of ``nurse''.

This experiment took approximately 1, 3, and 13 hours per iteration with 50, 100, and 200 equivalence queries respectively for the base models on a PowerEdge R7525 Server.
For the large models, one iteration took approximately 2, 5, and 15 hours for 50, 100, and 200 equivalence queries respectively on a PowerEdge R7525 Server (Table \ref{tab:runtimes}) 
Although we could see an improvement in the quality of rules we extracted with 200 equivalence queries, the runtime is also significantly higher. In our experiments, 100 equivalence queries were sufficient to extract the same rules in 70\% of experiments. There is a trade off between the runtime and the quality of the extracted rules that favors multiple rounds of Horn with 100 equivalence queries over less rounds with 200 equivalence queries.

\begin{table}[h]
    \centering
    \begin{tabular}{|c|c|c|c|c|}
        \hline
        \# EQs & BERT-base & BERT-large & RoBERTa-base & RoBERTa-large \\
        \hline
        50    &  $71.74$ & $130.01$ &  $69.76$ & $129.22$ \\
        100   & $193.74$ & $303.96$ & $184.82$ & $308.73$\\
        200   & $722.55$ & $899.13$ & $771.97$ & $943.26$\\
        \hline
    \end{tabular}
    \caption{Average run time for one experiment iteration [in minutes].}
    \label{tab:runtimes}
\end{table}

\section{Conclusion}\label{sec:conclusion}

\textcolor{black}{We studied the problem of exact learning Horn envelopes using membership queries and equivalence queries \cite{angluinqueries}. This theoretical problem was motivated from the objective to apply exact learning algorithms to extract knowledge from trained neural networks, where we use random sampling to simulate the equivalence queries and the network as a membership oracle. In particular, our work takes into account the fact  that trained neural networks may not be   Horn.}

\textcolor{black}{We presented an algorithm that learns Horn envelopes in exponential time in general, and in polynomial time (in the size of the smallest equivalent CNF representation of the target) if the target is ``close to Horn'' in the sense that there are only polynomially many non-Horn negative examples for it. We noted that this algorithm also learns under the same conditions as above when applied to a CNF equivalence oracle instead of a Horn equivalence oracle. Thus we also showed that, \emph{within} the class of all CNFs, the ``almost-Horn'' CNFs are learnable in polynomial time. We also showed that learning Horn envelopes in polynomial time is as hard as learning arbitrary CNF in polynomial time (a problem  known to be hard \cite{FELDMAN200913}).}

\textcolor{black}{We also performed   experiments where we adapt Angluin's classical algorithm for exact learning Horn theories to make it applicable to learn from   masked  language models. We performed experiments on pre-trained  language models and extracted rules exposing occupation-based gender biases in these models. 
While these results are not surprising given  the results of several authors when probing language models (see Subsection~\ref{sec:probing}) and existing gender biases in the society~\cite{bias}, our approach provides a
way  of exploring other potential correlations such as those related to time periods and location. }

\section*{Acknowledgements}
Koudijs and Ozaki are supported by
the NFR project ``Learning Description Logic Ontologies'',
grant number 316022, led by Ozaki. 
Touileb is supported by industry partners and the Research Council of Norway with funding to \textit{MediaFutures: Research Centre for Responsible Media Technology and Innovation}, through the centers for Research-based Innovation scheme, project number 309339.

\bibliographystyle{plain}
\bibliography{references.bib}

\appendix

\section{Non-termination of HORN when the target is non-Horn}\label{sec:appendix}
In this appendix, we give the pseudo-code of the classical algorithm and give an example run of a non-Horn target with a choice of counterexamples on which it does not terminate (see also~\cite{DBLP:journals/corr/abs-2204-00361}).

\SetKwInput{KwData}{input}
\SetKwInput{KwResult}{output}
\begin{algorithm}\label{alg:Horn}
\SetAlgoVlined
\LinesNumbered
\caption{HORN}
\KwData{$\MQ_\varphi$ and $\EQ_\varphi$ oracles.}
\KwResult{A Horn formula $\Hmc$ such that $\Hmc\equiv\varphi$.}
\BlankLine
$\Hmc:=E^-:=\emptyset$\\
\While{$\EQ_\varphi(\Hmc)=(\textup{no},x)$ \label{lin:while}}
{  \eIf{$x\not\models \Hmc$}
        {\For{$h\in \Hmc$ \textup{such that} $x\not\models h$}   
        {$\Hmc:=\Hmc\setminus\{h\}$}
        }
        {   \eIf{$\exists e\in E^-\;\textup{s.t.}\;\MQ_\varphi(x \cap e)=\textup{no}\;\textup{and}\;x\cap e\subset e$ \label{lin:condition}}
            {let $e$ be the first such negative example \\
            replace $e\in E^-$ with $e\cap x$ \label{lin:replace}
            }
            {append $x$ to $E^-$\\}
            }
            $\Hmc:=\bigcup_{e\in E^-}\{\bigwedge e\to p\;|\;p\not\in e\}\cup\{\bigwedge e\to\bot\}$
            
        }
\textbf{return} $(\Hmc)$
\end{algorithm}

\begin{proposition}
Algorithm \ref{alg:Horn} (i.e. the classic Horn learning algorithm of \cite{Horn}) is not guaranteed to terminate when the target is non-Horn.
\end{proposition}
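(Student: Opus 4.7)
My plan is to exhibit a concrete non-Horn target $\varphi$ together with a valid sequence of oracle responses that forces Algorithm~\ref{alg:Horn} into an infinite loop. For concreteness I would take the smallest non-Horn formula, $\varphi := p \vee q$ over $\Vsf = \{p, q\}$, whose model set $\{\{p\}, \{q\}, \{p,q\}\}$ fails to be closed under intersection (since $\{p\} \cap \{q\} = \emptyset \not\models \varphi$), so $\varphi$ is non-Horn by Proposition~\ref{prop:closedunderintersectioniffHorn}. The unique negative example $\emptyset$ is simultaneously the unique non-Horn example, and the plan is to show that the classical algorithm cannot absorb it.

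I would trace an adversarial run as follows. Starting from $H = \emptyset$ and $E^- = [\,]$, the oracle returns $\emptyset$ as a negative counterexample; since $E^-$ is empty the refinement condition on line~\ref{lin:condition} fails, $\emptyset$ is appended, and the rebuild step sets $H = \{p, q, \bot\}$, which is unsatisfiable. Next the oracle returns $\{p\}$, which is a positive counterexample because $\{p\} \models \varphi$ but $\{p\} \not\models H$; the for-loop drops the clauses $q$ and $\bot$, leaving $H = \{p\}$. Then the oracle returns $\{q\}$, similarly dropping $p$ and restoring $H = \emptyset$. We are now syntactically back to the starting hypothesis.

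The crucial observation is that on the next iteration the oracle is again entitled to return $\emptyset$, yet the refinement step on line~\ref{lin:condition} cannot fire: the only candidate $e \in E^-$ is $\emptyset$ itself, for which $\emptyset \cap \emptyset = \emptyset$ is not strictly contained in $\emptyset$. Consequently $\emptyset$ is appended a second time to $E^-$, and the rebuild recovers $H = \{p, q, \bot\}$ verbatim, returning us to the state after the first round. Iterating this cycle forever yields a non-terminating run in which $H$ cycles through the three values $\{p, q, \bot\} \to \{p\} \to \emptyset$ repeatedly while $E^-$ merely accumulates copies of $\emptyset$.

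The only thing to check rigorously is that every oracle answer in the trace is a valid counterexample to the current hypothesis, which is an immediate calculation on the three models of $\varphi$. The conceptual reason behind the failure, which I would emphasize, is that the non-Horn example $\emptyset$ admits no ``Horn explanation'' entailed by $\varphi$---by Proposition~\ref{prop:envelopedescription} the envelope of $\varphi$ is trivial---so the classical algorithm is condemned to search indefinitely for such an explanation. This is precisely the pathology that Algorithm~\ref{alg:learn} circumvents by quarantining confirmed non-Horn examples into $\NH$ and excluding them via quasi-Horn clauses.
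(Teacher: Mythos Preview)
Your proof is correct and follows essentially the same approach as the paper: exhibit a non-Horn target and an adversarial oracle that cycles through a non-Horn negative example followed by positive counterexamples that strip away every Horn clause generated from it, returning the hypothesis to its earlier state. The paper happens to use a larger instance ($\Vsf=\{a,b,c,d\}$, $\varphi=\{a\to\bot,\, b\vee c\}$, with non-Horn example $\{d\}$), but your minimal choice $\varphi=p\vee q$ over two variables executes the identical three-step cycle and is arguably cleaner.
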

\begin{proof}
Let $\Vsf=\{a,b,c,d\}$ and let $\varphi=\{a\to\bot, b\vee c\}$ be the target. We have that  $\env(\varphi)=\{a\to\bot\}$ and hence e.g. $\{d\}$ is a non-Horn negative example for $\varphi$ because $\{d\}\models\env(\varphi)$ but $\{d\}\not\models b\vee c$. Now we describe an adversarial choice of counterexamples that forces Algorithm \ref{alg:Horn} to add the same metarule and then remove it to get an empty hypothesis $\Hmc$ over and over again. First, we return the non-Horn negative example $\{d\}$, whence the rules $d\to a,d\to b,d\to c$ and $d\to\bot$ are added to $\Hmc$. Then we return the positive counterexample $\{b,d\}\not\models d\to c,d\to\bot$, after which only the rule $d\to b$ is not removed $\Hmc$. Next, we give the positive counterexample $\{c,d\}\not\models d\to b$ after which also $d\to b$ is removed from $\Hmc$ so $\Hmc$ is empty. Therefore, if these three counterexamples are returned again and again in this order, Algorithm \ref{alg:Horn} does not terminate.\footnote{The optimized version of the classic Horn algorithm (called ``HORN1'' in \cite{Horn}) is not even well-defined for a non-Horn target. In this case we would end up with a metarule whose consequent is empty. This is not the same as the consequent of a metarule being $\bot$; in fact this has no meaning.} 
\end{proof}

\section{Experiment Results and Tables}

\begin{table}[ht]
    \centering
    \begin{tabular}{|c|c|c|c|}
         \hline
         \# & BERT-base & \# & BERT-large \\
         \hline
            10 & $\text{nurse} \land \text{male}  \rightarrow \bot$ & 10 & $\text{nurse} \land \text{male}  \rightarrow \bot$\\
            10 & $\text{diplomat} \land \text{female}  \rightarrow \bot$ & 10 & $\text{diplomat} \land \text{female}  \rightarrow \bot$\\
            10 & $\text{mathematician} \land \text{female}  \rightarrow \bot$ & 10 & $\text{mathematician} \land \text{female}  \rightarrow \bot$\\
            10 & $\text{banker} \land \text{female}  \rightarrow \bot$ & 10 & $\text{banker} \land \text{female}  \rightarrow \bot$\\
            9 & $\text{footballer} \land \text{female}  \rightarrow \bot$ & 10 & $\text{footballer} \land \text{female}  \rightarrow \bot$\\
            9 & $\text{lawyer} \land \text{female}  \rightarrow \bot$ & & \\
            8 & $\text{priest} \land \text{female}  \rightarrow \bot$ & 10 & $\text{priest} \land \text{female}  \rightarrow \bot$\\
            & & 10 &  $\text{singer} \land \text{male}  \rightarrow \bot$ \\
            & & 10 & $\text{dancer} \land \text{male}  \rightarrow \bot$ \\
         \hline
    \end{tabular}
    \caption{
    Rules  extracted at least 7 out of 10 times   with  BERT models and 100 equivalence queries.
    }
    \label{tab:BERT100}
\end{table}

\begin{table}[ht]
    \centering
    \begin{tabular}{|c|c|c|c|}
    \hline
        \# & RoBERTa-base & \# & RoBERTa-large \\
         \hline
            10 & $\text{priest} \land \text{female}  \rightarrow \bot$ & 10 & $\text{priest} \land \text{female}  \rightarrow \bot$\\
            10 & $\text{nurse} \land \text{male}  \rightarrow \bot$ & 10 & $\text{nurse} \land \text{male}  \rightarrow \bot$\\
            10 & $\text{diplomat} \land \text{female}  \rightarrow \bot$ & & \\
            10 & $\text{mathematician} \land \text{female}  \rightarrow \bot$ & 10 & $\text{mathematician} \land \text{female}  \rightarrow \bot$\\
            9 & $\text{banker} \land \text{female}  \rightarrow \bot$ & 10 & $\text{banker} \land \text{female}  \rightarrow \bot$\\
            9 & $\text{footballer} \land \text{female}  \rightarrow \bot$ & 10 & $\text{footballer} \land \text{female}  \rightarrow \bot$\\
            8 & $\text{lawyer} \land \text{female}  \rightarrow \bot$ & 10 & $\text{lawyer} \land \text{female}  \rightarrow \bot$\\
            & & 10 & $\text{fashion\_designer} \land \text{male}  \rightarrow \bot$ \\
            & & 10 & $\text{dancer} \land \text{male}  \rightarrow \bot$ \\
            & & 7 &  $\text{singer} \land \text{male}  \rightarrow \text{before 1875} $ \\
         \hline
    \end{tabular}
    \caption{Rules  extracted at least 7 out of 10 times   with   RoBERTa models and 100 equivalence queries.}
    \label{tab:roBERTa100}
\end{table}

\begin{table}[ht]
    \centering
    \begin{tabular}{|c|c|c|c|}
         \hline
         \# & BERT-base & BERT-large \\
         \hline
            10 & $\text{lawyer} \land \text{female}  \rightarrow \bot$ & \\
            10 & $\text{diplomat} \land \text{female}  \rightarrow \bot$ &$\text{diplomat} \land \text{female}  \rightarrow \bot$ \\
            10 & $\text{priest} \land \text{female}  \rightarrow \bot$ &$\text{priest} \land \text{female}  \rightarrow \bot$ \\
            10 & $\text{nurse} \land \text{male}  \rightarrow \bot$ &$\text{nurse} \land \text{male}  \rightarrow \bot$ \\
            10 & $\text{mathematician} \land \text{female}  \rightarrow \bot$ & $\text{mathematician} \land \text{female}  \rightarrow \bot$ \\
            10 & $\text{footballer} \land \text{female}  \rightarrow \bot$ & $\text{footballer} \land \text{female}  \rightarrow \bot$ \\
            10 & $\text{banker} \land \text{female}  \rightarrow \bot$ & $\text{banker} \land \text{female}  \rightarrow \bot$ \\
            8 & $\text{dancer} \land \text{male} \land \text{South America}  \rightarrow \bot$ & \\
            8 & $\text{1875-1925} \land \text{fashion\_d} \land \text{female}  \rightarrow \bot$ & \\
            7 & $\text{dancer} \land \text{male} \land \text{Europe}  \rightarrow \bot$ & \\
            7 & $\text{fashion\_d} \land \text{1925-1951} \land \text{female}  \rightarrow \bot$ & \\
            7 & $\text{dancer} \land \text{male} \land \text{North America}  \rightarrow \bot$ & \\
            10 & & $\text{singer} \land \text{male}  \rightarrow \bot$ \\
            10 & & $\text{dancer} \land \text{male}  \rightarrow \bot$ \\
            10 & & $\text{lawyer} \land \text{female}  \rightarrow \text{Australia} $ \\
            8 & & $\text{fashion\_d} \land \text{male}  \rightarrow \text{Americas} $ \\
            8 & & $\text{fashion\_d} \land \text{male}  \rightarrow \text{before 1875} $ \\
         \hline 
    \end{tabular}
    \caption{
    Rules  extracted at least 7 out of 10 times   with  BERT models and 200 equivalence queries.
    }
    \label{tab:BERT200}
\end{table}

\begin{table}[ht]
    \centering
    \begin{tabular}{|c|c|c|}
    \hline
        \# & RoBERTa-base & RoBERTa-large \\
         \hline
        10 & $\text{lawyer} \land \text{female}  \rightarrow \bot$ & $\text{lawyer} \land \text{female}  \rightarrow \bot$\\
        10 & $\text{diplomat} \land \text{female}  \rightarrow \bot$ &\\
        10 & $\text{priest} \land \text{female}  \rightarrow \bot$ & $\text{priest} \land \text{female}  \rightarrow \bot$\\
        10 & $\text{nurse} \land \text{male}  \rightarrow \bot$ & $\text{nurse} \land \text{male}  \rightarrow \bot$\\
        10 & $\text{mathematician} \land \text{female}  \rightarrow \bot$ & $\text{mathematician} \land \text{female}  \rightarrow \bot$ \\
        10 & $\text{footballer} \land \text{female}  \rightarrow \bot$ & $\text{footballer} \land \text{female}  \rightarrow \bot$\\
        10 & $\text{banker} \land \text{female}  \rightarrow \bot$ & $\text{banker} \land \text{female}  \rightarrow \bot$\\
        10 & & $\text{dancer} \land \text{male}  \rightarrow \bot$ \\
        10 & & $\text{fashion\_d} \land \text{male}  \rightarrow \bot$ \\
        10 & & $\text{diplomat} \land \text{female}  \rightarrow \text{Oceania} $ \\
        9 & $\text{Australia} \land \text{fashion\_d} \land \text{male}  \rightarrow \bot$ &\\
        9 & $\text{Oceania} \land \text{dancer} \land \text{male}  \rightarrow \bot$ &\\
        8 & $\text{singer} \land \text{female} \land \text{1951-1970}  \rightarrow \bot$ &\\
        8 & $\text{dancer} \land \text{male} \land \text{Africa}  \rightarrow \bot$ &\\
        8 & & $\text{singer} \land \text{male} \land \text{1951-1970}  \rightarrow \bot$ \\
         \hline
    \end{tabular}
    \caption{Rules  extracted at least 7 out of 10 times   with   RoBERTa models and 200 equivalence queries.}
    \label{tab:roBERTa200}
\end{table}

\begin{table}[ht]
    \centering
    \begin{tabular}{|c|c|}
        \hline
        position & value \\
        \hline
        \multicolumn{2}{|c|}{time period}\\
        \hline
        0 & before 1875 \\
        1 & between 1975 and 1925 \\
        2 & between 1925 and 1951 \\
        3 & between 1951 and 1970 \\
        4 & after 1970 \\
        - & in an unknown time period \\
        \hline
        \multicolumn{2}{|c|}{continent}\\
        \hline
        5 & North America \\
        6 & Africa \\
        7 & Europe \\
        8 & Asia \\
        9 & South America \\
        10 & Oceania \\
        11 & Eurasia \\
        12 & Americas \\
        13 & Australia \\
        - & an unknown place \\
        \hline
        \multicolumn{2}{|c|}{occupation}\\
        \hline
        14 & fashion designer \\
        15 & nurse \\
        16 & dancer \\
        17 & priest \\
        18 & footballer \\
        19 & banker \\
        20 & singer \\
        21 & lawyer \\
        22 & mathematician \\
        23 & diplomat \\
        - & not known occupation \\  
        \hline
        \multicolumn{2}{|c|}{gender}\\
        \hline 
        24 & female \\
        25 & male \\
        \hline
    \end{tabular}
    \caption{Lookup table. For each attribute, at most one of the positions can be chosen (setting its value to 1). If none is chosen, we use   the value in ``-''.}
    \label{tab:attributes}
\end{table}

\end{document}